\renewcommand{\vec}[1]{\boldsymbol{#1}}
\newcommand*{\defeq}{\mathrel{\vcenter{\baselineskip0.5ex \lineskiplimit0pt
			\hbox{\footnotesize.}\hbox{\footnotesize.}}}%
	=}
\begin{document}
	\title{Machine Learning with the Sugeno Integral:\\ The Case of Binary Classification}

	\author{Sadegh Abbaszadeh and Eyke H{\"u}llermeier}
	
	\institute{Paderborn University\\
		Heinz Nixdorf Institute and Department of Computer Science\\
		Intelligent Systems and Machine Learning Group\\
		\email{sadegh.abbaszadeh@uni-paderborn.de, eyke@upb.de}}

\maketitle

	\begin{abstract}
In this paper, we elaborate on the use of the Sugeno integral in the context of machine learning. More specifically, we propose a method for binary classification, in which the Sugeno integral is used as an aggregation function that combines several local evaluations of an instance, pertaining to different features or measurements, into a single global evaluation. Due to the specific nature of the Sugeno integral, this approach is especially suitable for learning from ordinal data, that is, when measurements are taken from ordinal scales. This is a topic that has not received much attention in machine learning so far. The core of the learning problem itself consists of identifying the capacity underlying the Sugeno integral. To tackle this problem, we develop an algorithm based on linear programming. The algorithm also includes a suitable technique for transforming the original feature values into local evaluations (local utility scores), as well as a method for tuning a threshold on the global evaluation. To control the flexibility of the classifier and mitigate the problem of overfitting the training data, we generalize our approach toward $k$-maxitive capacities, where $k$ plays the role of a hyper-parameter of the learner. 
We present experimental studies, in which we compare our method with competing approaches on several benchmark data sets. \\[2mm]
\emph{Key words}: Machine learning, binary classification, Sugeno integral, aggregation, non-additive measures	
	\end{abstract}

\section{Introduction}

The idea of combining models and aggregation functions from the field of (multi-criteria) decision making with data-driven approaches for model identification from the field of machine learning has attracted increasing attention in recent years. Examples of such combinations include methods for learning the majority rule model \cite{SobrieMP13}, the non-compensatory sorting model \cite{SobrieMP15}, or the Choquet integral \cite{fallhullermei12}. In contrast to many other machine learning approaches, corresponding models are interpretable and meaningful from of decision making point of view, a property that has gained increasing attention in the recent past \cite{demirel2018choquet,abbaszadeh2018fuzzy}. Besides, they often guarantee other properties that might be desirable, such as monotonicity. 

The general structure of such models, sketched in Fig.\ \ref{fig:mcml}, is as follows: Given a choice alternative described in terms of an attribute vector $\vec{x} = (x_1, \ldots , x_m)$, each attribute $x_i$ is first evaluated by means of a local utility function, and thereby turned into a local utility degree $u_i = u_i(x_i) \in \mathbb{R}$\,---\,it corresponds to what is called a ``criterion'' in multi-criteria decision analysis. In a second step, the local utility degrees $u_1, \ldots , u_m$ are aggregated into a global utility $U = U(u_1, \ldots , u_m)$. Finally, a decision or an action $y$ is taken based on this utility. We refer to this setting as \emph{multi-criteria machine learning} (MCML).

Our work is motivated by recent contributions, in which the aggregation step is accomplished by means of the (dicrete) Choquet integral \cite{fallhullermei12,TehraniH13,mpub256,mpub244}. As a versatile aggregation function, the Choquet integral has a number of properties that are quite appealing from a machine learning point of view. For example, it allows for combining non-linearity with monotonicity, i.e., to model nonlinear yet monotone dependencies between input attributes (criteria) and outcomes (global utilities, decisions). Besides, it is able to capture complex interactions between different input attributes. 

In this paper, we consider the MCML setting with the Sugeno integral \cite{suge_to} instead of the Choquet integral as an aggregation function. The former can be seen as the qualitative counterpart of the latter: As it operates on a purely ordinal (instead of a numerical) scale, it appears to be specifically suitable for learning from ordinal data. Moreover, whereas machine learning methods based on the Choquet integral can be seen as generalizations of learning linear models, methods based on the Sugeno integral are conceptually similar to symbolic, logic-based model classes such as decision trees and rule-based models, in which predictions are obtained by testing properties of the attribute values $x_i$ of an instance $\vec{x}$, typically comparing them with certain thresholds, but without doing any numerical computations with them.

In the next section, we recall the definition of the Sugeno integral and some of its basic properties.  In Section~\ref{VC}, we introduce a class of binary classifiers, which are based on thresholding the Sugeno integral, and analyze the flexibility of this model class in terms of its VC dimension. A method for learning binary classifiers of that kind is introduced in Section~\ref{learning}. Section~\ref{Complexity} addresses the idea of controlling the degree of maxitivity of the Sugeno integral as a means for adjusting the model flexibility to the complexity of the data, thereby avoiding the problem of overfitting the training data. Section~\ref{Experimental} presents the results of an experimental study, prior to concluding the paper in Section \ref{Conclusion}. 

\begin{figure}[t]
\begin{center}
\includegraphics[width=9cm]{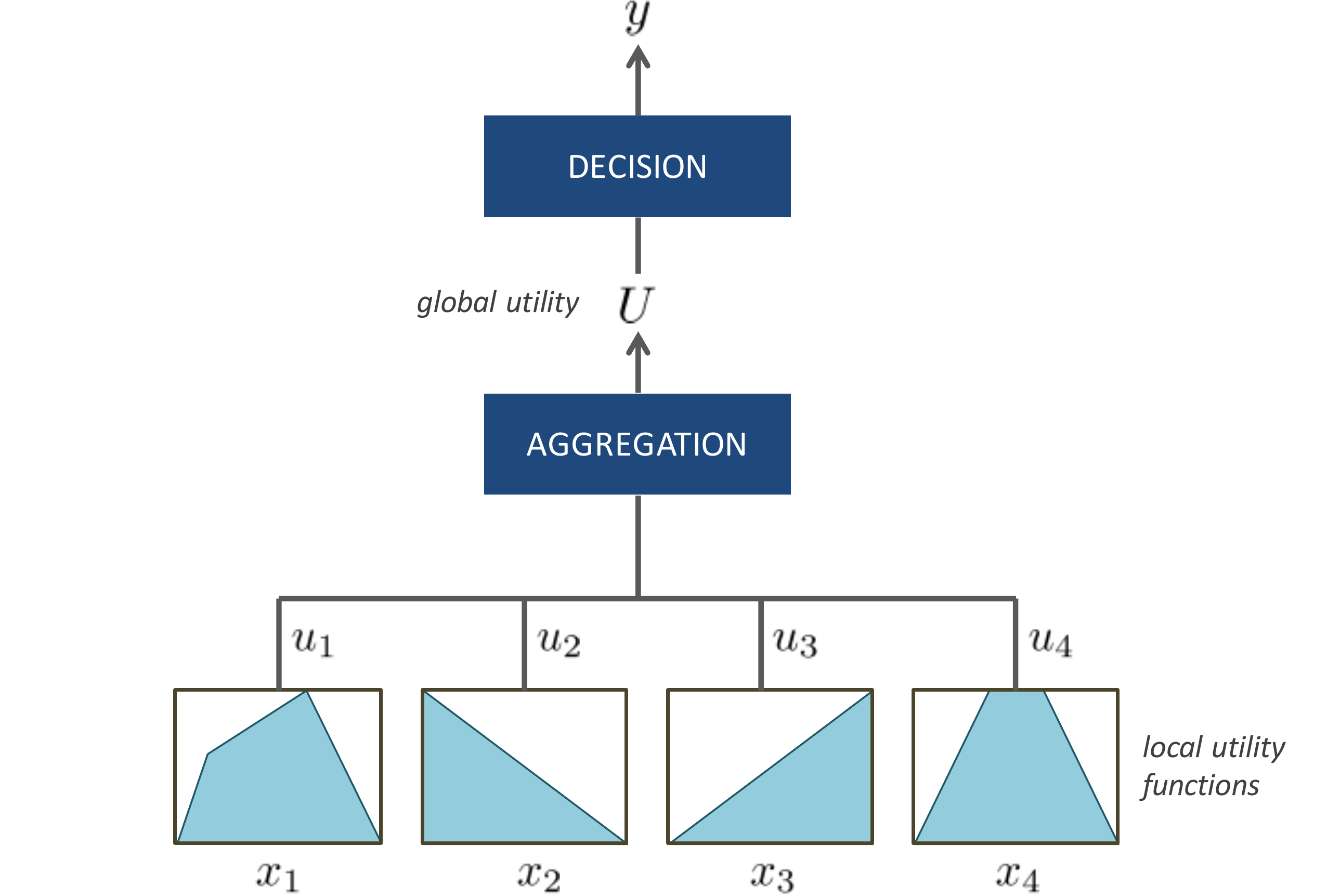}
\caption{Structure of a multi-criteria machine learning model.}
\label{fig:mcml}
\end{center}
\end{figure}

\section{The Sugeno Integral}
\label{Sugeno}

Recall that a set function on $[m] \defeq \{1, \ldots , m\}$ is a real-valued function defined on all subsets of $[m]$. A capacity on $[m]$ is a set function $ \mu: 2^{[m]} \rightarrow \mathbb{R}$ that preserves monotonicity, i.e., such that $\mu(A) \leq \mu(B)$ for all $A, B \in 2^{[m]}$ with $A \subseteq B$. The capacity $\mu$ is normalized if $\mu([m])=1$, and additive if $\mu(A \cup B) = \mu(A) + \mu(B)$ for any disjoint subsets $A, B \subseteq [m]$. Likewise, it is called ``maxitive'' if $\mu(A \cup B) = \mu(A) \vee \mu(B) = \max(\mu(A), \mu(B))$ for any subsets $A, B \subseteq [m]$  \cite[p.\ 173]{grabisch2009aggregation}.



\subsection{Definition of the Sugeno Integral}

The Sugeno integral is an aggregation function defined with respect to a capacity, i.e., it combines a set of values $u_1, \ldots , u_m$ pertaining to $m$ criteria into a single representative value, accounting for the importance of subsets of criteria as specified by the capacity. Since the values $u_i$ are measured on an ordinal scale, the Sugeno integral relies on disjunctive and conjunctive operations (instead of addition and multiplication). 
Formally, it can be expressed in various ways \cite{grabisch2009aggregation}. A common definition of the Sugeno integral with respect to the capacity $\mu$ on $[m]$ is as follows:
\begin{equation}\label{1}
S_{\mu}(\vec{u})= S_{\mu}(u_1, \ldots , u_m) = \bigvee_{j=1}^{m} \left(u_{\sigma(j)} \wedge \mu\left(A_{\sigma(j)}\right)\right),
\end{equation}
where $\sigma$ is a permutation on $[m]$ such that $u_{\sigma(1)} \leq u_{\sigma(2)} \leq \ldots \leq u_{\sigma(m)}$ and  $A_{\sigma(j)} = \{\sigma(j), \sigma(j+1), \ldots, \sigma(m)\}$. Since the Sugeno integral is a particular weighted lattice polynomial function, its disjunctive normal representation is given by
\begin{equation}\label{2}
S_{\mu}(\vec{u})= \bigvee_{A \subseteq [m]} \left(\bigwedge_{j \in A} u_j \wedge \mu(A)\right) \enspace .
\end{equation}
Considering disjunction and conjunction as mathematical formalizations of existential and universal quantification, respectively, this representation suggests the following interpretation: The Sugeno integral $S_{\mu}(\vec{u})$ is high if (and only if) there exists a subset of criteria $A$, such that $A$ has high importance and all values $u_i$ on these criteria are high. In other words, $\mu(A)$ is a measure of sufficiency of the criteria $A$: Satisfying all criteria in $A$ (i.e., achieving high utilities) is enough to achieve an overall high utility.

Another important representation of the Sugeno integral, which we will exploit later on, is a definition in terms of a median \cite[p.\ 213]{grabisch2009aggregation}:
\begin{equation*} 
S_{\mu}(\vec{u})= \text{Med}\Big(u_1, \ldots, u_m, \mu\left(A_{\sigma(2)}\right),  \ldots, \mu\left(A_{\sigma(m)}\right)\Big)
\end{equation*}
Thus, the Sugeno integral can be obtained by sorting the $2m -1$ values, which are given as arguments to $\text{Med}$ in the above expression, from smallest to largest, and then taking the value at position $m$ in this sorted list.

\subsection{The $k$-maxitive Sugeno Integral}

With a measure (capacity) $\mu$, interactions and dependencies between criteria can be modeled in a very flexible way. An obvious drawback, however, is the exponential complexity implied by this approach: The specification of a capacity requires a value $\mu(A)$ for each $A \subseteq [m]$. In the case of the Choquet integral, it has therefore been suggested to trade complexity against expressivity by working with $k$-additive measures, which essentially means capturing interactions between criteria up to a degree of $k$ \cite{fallhullermei12,mpub256}. Practically, the full expressivity of a capacity is indeed rarely needed\,---\,on the contrary, from a machine learning point of view it may even be harmful, due to an increased danger of overfitting the training data \cite{fallhullermei12}. Instead, values such as $k=2$ or $k=3$ will typically suffice. The case $k=2$ is especially interesting, and the jump in performance from $k=1$ to $k=2$ is often the highest. This is because, whereas $k=1$ is not able to capture any dependencies, $k=2$ is able to capture pairwise dependencies, and thus dependencies of higher order at least indirectly.

The qualitative analogue of $k$-additivity is $k$-maxitivity \cite{Radko,brabant2018k}. 
Formally, the notions of $k$-maxitive capacity and $k$-maxitive aggregation function are defined as follows:
A capacity $\mu$ is called $k$-maxitive if for any subset $U \subseteq [m]$ with $|U| >k$, there exists a proper subset $V$ of $U$ such that $\mu(V) =\mu(U)$. For $k>1$, $\mu$ is called proper $k$-maxitive if it is $k$-maxitive but not ($k-1$)-maxitive. Note that the $k$-maxitivity of a capacity $\mu$ can be characterized equivalently by the following condition:
\begin{align*}
\mu(U)= \bigvee_{V \subset U} \mu(V) \quad \text{whenever} \quad |U| >k
\end{align*}
A Sugeno integral $S_{\mu}$ is $k$-maxitive, if the underlying measure $\mu$ is $k$-maxitive.
Regarding the issue of complexity, note that a $k$-maxitive capacity requires the specification of 
$$
\sum_{i=1}^k {m \choose i} 
$$
values $\mu(A)$, which, for small to moderate $k$, is substantially less than the $2^m -2$ values needed for the general case, and remains polynomial in $k$.

\section{The Sugeno Integral for Binary Classification}
\label{VC}

A binary classifier is a map $h: \, \mathcal{X} \rightarrow \{0,1 \}$, where $\mathcal{X}$ is a so-called \emph{instance space}; here, we make the common assumption that instances are described in terms of attributes or features, i.e., we assume $\mathcal{X} = \mathcal{X}_1 \times \ldots \times \mathcal{X}_m$, where $\mathcal{X}_i$ is the domain of the $i^{th}$ attribute. Thus, a binary classifier accepts any instance $\vec{x} \in \mathcal{X}$ as an input, and either assigns it to the negative ($h(\vec{x})=0$) or to the positive class ($h(\vec{x})=1$). The learning task essentially consists of choosing an appropriate classifier $h$ from an underlying \emph{hypothesis space} $\mathcal{H}$, given a set of training data\,---\,we will come back to this task in Section~\ref{learning} below.   

We are interested in hypotheses that are expressed in terms of the Sugeno integral. More specifically, we consider a hypothesis space $\mathcal{H}$ consisting of threshold classifiers of the following form, which we simply refer to as ``Sugeno classifiers'': 
\begin{equation}\label{4}
h(\vec{x}) = h(x_1, \ldots, x_m) = \mathbb{I}\Big(S_{\mu}(f(\vec{x})) \geq \beta \Big) \, ,
\end{equation}
where $\mathbb{I}(\cdot)$ is the indicator function and
$$
f(\vec{x}) = \big(f_1(x_1), \ldots , f_m(x_m) \big) = (u_1, \ldots , u_m) \in [0,1]^m \, .
$$
Thus, given an instance $\vec{x} = (x_1, \ldots , x_m)$, a classification is accomplished in three steps:
\begin{itemize}
\item First, each feature $x_i$ is turned into a local utility $u_i$ using the transformation $f_i$.
\item The local utilities, considered as criteria, are combined into an overall utility using the Sugeno integral with capacity $\mu$. 
\item The class assignment is done via thresholding, i.e., by comparing the overall utility with a threshold $\beta$.
\end{itemize}  
Note that a hypothesis $h \in \mathcal{H}$ is thus specified by three components: the transformation $f$, the capacity $\mu$, and the threshold $\beta$. The corresponding hypothesis space $\mathcal{H}$ is relatively rich and allows for modeling classification functions in a very flexible way, especially if the capacity $\mu$ can be chosen without any restrictions. In fact, we can prove the following result about the VC dimension\footnote{The Vapnik-Chervonenkis (VC) dimension is an measure of flexibility of a hypothesis space, which plays an important role in generalization and statistical learning theory \cite{vapn_sl98}.} of $\mathcal{H}$ as defined above (see Appendix \ref{sec:appa}).

\begin{theorem}
The VC dimension of the hypothesis space $\mathcal{H}$ comprised of threshold classifiers of the form (\ref{4}) grows asymptotically at least as fast as $2^m / \sqrt{m}$.
\end{theorem}

It is also interesting to note that the class of Sugeno classifiers covers several types of classifiers, which are commonly used in machine learning, as special cases. This includes, for example, $k$-of-$m$ classifiers, which assign an instance to the positive class if at least $k$ of the set of $m$ criteria are fulfilled, and to the negative class otherwise. More specifically, (\ref{4}) can be specialized to this type of classifier as follows:
\begin{itemize}
\item Features $x_i$ are transformed into binary utilities via $f_i: \mathcal{X}_i \rightarrow \{0,1\}$; thus, $f_i$ simply distinguishes between ``good'' and ``bad'' values $x_i$. 
\item The capacity $\mu$ is defined by $\mu(A) =  |A|/m$. 
\item The threshold $\beta$ takes the value $\nicefrac{k}{m}$. 
\end{itemize}
Another interesting setting is as follows:
\begin{itemize}
\item Again, features $x_i$ are transformed into binary utilities $u_i \in \{ 0,1 \}$, suggesting that a criterion is either satisfied or not. 
\item The capacity $\mu$ is specified by a set of subsets $A_1, \ldots, A_J \subseteq [m]$ as follows: $\mu(A) = 1$ if $A_j \subseteq A$ for some $j \in [J]$, and $\mu(A) = 0$ otherwise. 
\item The threshold $\beta$ takes the value $\nicefrac{1}{2}$.
\end{itemize}
Here, each $A_j = \{ i_1, \ldots , i_j \}$ can be thought of as a rule of the following form:
$$
\text{ IF } (u_{i_1} = 1) \text{ AND } \ldots \text{ AND } (u_{i_j} = 1) \text{ THEN } h=1
$$
The overall classification is positive if at least one of the rules applies, and negative otherwise. Models of this kind are closely related to monotone decision rules \cite{demb_lr09} and monotone decision trees \cite{poth_ct02}.

More generally, a Sugeno classifier with threshold $\beta$ can be interpreted as a logical formula, namely as a disjunctive normal form:
\begin{equation}\label{eq:fkom}
h(\vec{x}) = \bigvee_{\text{boundary\,sets\,}A} \, \bigwedge_{j \in A} \mathbb{I} \big(f_i(x_i) \geq \beta \big) \, ,
\end{equation}
where a boundary set $A$ is a subset $A \subseteq [m]$ such that $\mu(A) \geq \beta$ and $\mu(A') < \beta$ for all $A' \subsetneq A$, i.e., a ``minimal'' subset reaching the threshold $\beta$. Note that, for a given measure $\mu$, the set of boundary sets of $[m]$ forms an antichain, which means that these sets are non-redundant. Also note that, in the case of $k$-maxitive measures, the size of boundary sets is at most $k$. Considering the threshold $\beta$ as a kind of aspiration level, the condition $\mathbb{I}(f_i(x_i) \geq \beta)$ in  (\ref{eq:fkom}) can be interpreted as ``feature $x_i$ is satisfactory''. Correspondingly, the Sugeno classifier assigns the positive class if $\vec{x}$ is satisfactory on all features in at least one boundary set of features $A$.

\begin{proposition}
The Sugeno classifier (\ref{4}) coincides with (\ref{eq:fkom}).
\end{proposition}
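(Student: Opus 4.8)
The plan is to collapse both descriptions of the classifier to a single transparent Boolean test on the instance. For $\vec{x} \in \mathcal{X}$ write $B(\vec{x}) \defeq \{ i \in [m] : f_i(x_i) \geq \beta \}$ for the set of features on which $\vec{x}$ is ``satisfactory''. I would show that both the Sugeno threshold rule (\ref{4}) and the disjunctive normal form (\ref{eq:fkom}) return $1$ exactly when $\mu\big(B(\vec{x})\big) \geq \beta$; since $\vec{x}$ is arbitrary, this proves that the two are the same map.

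First I would treat (\ref{4}) by means of the disjunctive normal representation (\ref{2}) of $S_{\mu}$. As the join in (\ref{2}) runs over the finite collection of subsets $A \subseteq [m]$, we have $S_{\mu}(f(\vec{x})) \geq \beta$ if and only if at least one term reaches $\beta$, which happens if and only if there exists $A$ with $\mu(A) \geq \beta$ and $f_j(x_j) \geq \beta$ for every $j \in A$; the latter condition says exactly $A \subseteq B(\vec{x})$. By monotonicity of $\mu$, the existence of such an $A$ is equivalent to $\mu\big(B(\vec{x})\big) \geq \beta$ (for the nontrivial direction, take $A = B(\vec{x})$ itself, which is admissible since $f_j(x_j) \geq \beta$ holds for every $j \in B(\vec{x})$). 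Hence (\ref{4}) equals $\mathbb{I}\big(\mu(B(\vec{x})) \geq \beta\big)$. The boundary case $A = \emptyset$ is consistent with the empty-meet convention in (\ref{2}), and under the usual normalization $\mu(\emptyset) = 0 < \beta$ it does not arise at all.

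Next I would do the same for (\ref{eq:fkom}). By the definition of a boundary set, the disjunction there evaluates to $1$ if and only if some boundary set $A$ satisfies $A \subseteq B(\vec{x})$. If $\mu\big(B(\vec{x})\big) \geq \beta$, the family of subsets of $B(\vec{x})$ on which $\mu$ is at least $\beta$ is nonempty and finite, so it possesses an inclusion-minimal member $A^{*}$; every $A' \subsetneq A^{*}$ is a proper subset of $A^{*} \subseteq B(\vec{x})$, hence lies in $B(\vec{x})$, and minimality of $A^{*}$ forces $\mu(A') < \beta$\,---\,so $A^{*}$ is a boundary set contained in $B(\vec{x})$ and (\ref{eq:fkom}) returns $1$. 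Conversely, a boundary set $A \subseteq B(\vec{x})$ yields $\mu\big(B(\vec{x})\big) \geq \mu(A) \geq \beta$ by monotonicity. Thus (\ref{eq:fkom}) also equals $\mathbb{I}\big(\mu(B(\vec{x})) \geq \beta\big)$, which proves the proposition. (If $[m]$ has no boundary set, the empty disjunction is $0$, in agreement with $\mu(B(\vec{x})) \leq \mu([m]) < \beta$.)

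The only step that needs genuine care is the ``shrinking'' argument in the treatment of (\ref{eq:fkom}): one must verify that an inclusion-minimal subset of $B(\vec{x})$ with $\mu \geq \beta$ is truly a boundary set in the sense of (\ref{eq:fkom}), and not just minimal relative to $B(\vec{x})$. This holds because any proper subset of such a set is automatically contained in $B(\vec{x})$, so the two notions of minimality coincide. Everything else is routine bookkeeping with the monotonicity of $\mu$ and with the elementary facts that a finite join is $\geq \beta$ iff one of its terms is, and a finite meet is $\geq \beta$ iff all of its arguments are.
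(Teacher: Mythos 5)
Your proof is correct and follows essentially the same route as the paper's: both directions rest on the disjunctive normal representation (\ref{2}), the monotonicity of $\mu$, and the fact that any set of measure at least $\beta$ contains a boundary set. Your version merely repackages the argument through the intermediate test $\mu(B(\vec{x})) \geq \beta$ and spells out the inclusion-minimality step that the paper only asserts (``$A'$ is either a boundary set or a superset of a boundary set''), which is a welcome but not substantively different addition.
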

\begin{proof}
Suppose that $h(\vec{x}) = 1$ according to (\ref{eq:fkom}). Thus, there exists a boundary set $A$ such that $\mathbb{I}(f_j(x_j) \geq \beta )$, i.e., $u_j \geq \beta$ for all $j \in A$. Now, consider the representation (\ref{2}) of the Sugeno integral. Since $\min_{j \in A} u_j \geq \beta$ and $\mu(A) \geq \beta$, we have $S_\mu(\vec{x}) \geq \beta$ and $h(\vec{x}) = 1$ according to (\ref{4}). 

Suppose that $h(\vec{x}) = 1$ according to (\ref{4}). Thus, according to (\ref{2}), there exists at least one subset $A' \subseteq [m]$ such that $u_j \geq \beta$ for all $j \in A'$ and $\mu(A') \geq \beta$. Therefore, $A'$ is either a boundary set or a superset of a boundary set, so that there exists a boundary set $A \subseteq A'$. Moreover, since $\min_{j \in A} u_j \geq \min_{j \in A'} u_j \geq \beta$, we conclude that $h(\vec{x}) = 1$ according to (\ref{eq:fkom}).
\end{proof}

\section{Learning a Sugeno Classifier}
\label{learning}

Consider the class of Sugeno classifiers, i.e., the hypothesis space $\mathcal{H}$, as introduced in the previous section. More specifically, following the idea of structural risk minimization, we structure the space as follows: $\mathcal{H}_1 \subset \mathcal{H}_2 \subset \cdots \subset \mathcal{H}_m$, where $\mathcal{H}_k$ denotes the class of Sugeno classifiers restricted to $k$-maxitive capacities. The parameter $k$ will serve as a hyper-parameter, i.e., as a parameter of the learning algorithm.

Given a set of training data of the form  
\begin{equation}\label{E1}
\mathcal{D}= \left\{\left( \vec{x}^{(i)}, y^{(i)}\right)  \right\}_{i=1}^{n} \subset \left(\mathcal{X} \times \{ 0,1 \}  \right)^n \, ,
\end{equation}
supposed to be generated i.i.d.\ (independent and identically distributed) according to an underlying (though unknown) probability measure $\textbf{P}$ on $\mathcal{X} \times \{ 0,1 \}$,
the task of the learning algorithm (or learner for short) is to find a Sugeno classifier $h \in \mathcal{H}_k$ with low risk (expected loss) 
\begin{equation}\label{E2}
R(h)= \int_{\mathcal{X} \times \{ 0,1 \} } \ell \left(  y , h(\vec{x}) \right) \,  d \, \textbf{P}(\vec{x}, y) \, ,
\end{equation}
where $\ell$ is a loss function such as $0/1$ loss ($\ell(y, \hat{y}) = 0$ if $y = \hat{y}$ and $=1$ otherwise). Recall that a classifier is identified by the feature transformation $f$, the capacity $\mu$, and the threshold $\beta$, which all have to be determined on the basis of the data $\mathcal{D}$. In the following, we discuss these components one by one.

\subsection{Feature Transformation}

As already explained, feature transformation is meant to turn each feature (predictor variable)  into a criterion that is measured in the unit interval, that is, to replace each feature value $x_i$ by a local utility score $u_i$. More specifically, feature transformation is assumed to assure monotonicity in the sense that higher values $u_i$ are better (more likely to produce the positive class), as well as commensurability between the criteria. 

Here, we make the assumption that monotonicity already holds for the original features, so that the transformations $f_i$ can be monotone as well. More precisely, we assume that higher values $x_i$ are better. If the opposite is the case (lower values are better), all values $x_i$ can simply be replaced by $-x_i$. Sometimes, the direction might not be be known beforehand, and instead must be determined on the basis of the data. In Appendix \ref{sec:appb}, we propose a method for that purpose.

In our current approach, the transformations $f_i$ are determined in an unsupervised manner, i.e., using the feature information $\vec{x}^{(1)}, \ldots , \vec{x}^{(n)}$ in the training data $\mathcal{D}$ but not the observed class labels $y^{(1)}, \ldots , y^{(n)}$. 
Starting from ``the higher the better'' feature values, transformation essentially comes down to normalizing the data, i.e., mapping the feature values (isotonically) to the unit interval. To this end, we make use of a quantile-based approach, that is, the idea to replace a value $x$ by the probability $P(X \leq x)$, where $P$ is the underlying distribution of the feature. Since $P$ is not known, it is replaced by the empirical distribution function. 

More precisely, the transformation $f_i$ of the $i^{th}$ feature is determined as follows. For a suitable permutation $\sigma$ on $[n]$, denote by $x_{i,\sigma(1)} \leq x_{i,\sigma(2)} \leq \cdots \leq x_{i,\sigma(n)}$ the sorted list of values that are observed for the $i^{th}$ feature in the training data $\mathcal{D}$. We then define the empirical distribution function $f_i: \mathbb{R} \rightarrow [0,1]$ in terms of a piecewise linear interpolation of the points $(x_{i,\sigma(j)}, a_j)$, where
\begin{equation*}
a_j = \frac{1}{2} \Big( 
 \big\vert  \{ l \, | \, x_{i,\sigma(l)} < x_{i,\sigma(j)} \} \big\vert 
+   \big\vert  \{ l \, | \, x_{i,\sigma(l)} \leq x_{i,\sigma(j)} \} \big\vert 
\Big) 
\end{equation*}
Note that the number of data points to be interpolated is not necessarily $n$, because tied $x$-values $x_{i,\sigma(j)} = x_{i,\sigma(j+1)}$ only contribute a single point. Averaging the cases of strict inequality $<$ and $\leq$ in the computation of $a_j$ is important for exactly those ties, which occur especially often on ordinal scales. To make the definition of $f_i$ complete, we set $f_i(x)= 0$ for $x < x_{i,\sigma(1)}$ and $f_i(x)=1$ for  $x > x_{i,\sigma(n)}$. 

\subsection{Learning the Capacity}

To learn the capacity $\mu$, we follow the principle of empirical risk minimization. Thus, we consider the problem of minimizing the 0/1 loss of the classifier \eqref{4} on the training data \eqref{E1}. This problem is provably NP-hard, so we cannot expect to find an optimal solution efficiently. Therefore, we opt for an approximate solution. To this end, we construct a linear program LP, the solution of which will determine the capacity $\mu$. In spite of the theoretical (worst-case) complexity of linear programming, this approach leads to a practically efficient learning algorithm, thanks to the availability of modern solvers that are able to handle programs with thousands of  variables and inequalities within seconds.

The inequalities of LP are coming from the monotonicity of the capacity $\mu$. For $A \subseteq [m]$, let $c_A$ denote the value $\mu(A)$, i.e., the value assigned by the capacity $\mu$ to $A$. Thus, the set $C = \{c_A \, | \, A \subset [m]\}$ corresponds to the parameters that need to be determined. Since $c_{\emptyset} = 0$ and $c_{[m]} = 1$, the following inequalities need to be added to the optimization problem:
\begin{equation}\label{E5}
\forall A \subseteq [m], b \in [m] \slash A : c_A \leq c_{A \cup \{b\}} 
\end{equation}
When fitting a $k$-maxitive capacity, some of these inequalities may actually turn into equalities. 

Let $\left( \vec{x}^{(i)}, y^{(i)}\right)$ be an instance in the training data \eqref{E1}. For simplicity, we subsequently drop the superscript and simply write $\vec{x}=\left(x_1, \ldots, x_m \right)$. Let $\sigma$ be a permutation such that $x_{\sigma(1)} \leq x_{\sigma(2)} \leq \ldots \leq x_{\sigma(m)}$. Using the median-representation of the Sugeno integral, we have
\begin{equation*}
S_{\mu}(\vec{x})= \text{Med}\Big(x_1, \ldots, x_m, \mu\left(A_{\sigma(2)}\right), \ldots, \mu\left(A_{\sigma(m)}\right)\Big),
\end{equation*}
where $A_{\sigma(i)}:=\{\sigma(i), \ldots, \sigma(m)\}$. 

Suppose that $\vec{x}$ is a positive example, i.e., $y = 1$. According to the classifier \eqref{4} for a given $\beta$, we must guarantee that $S_{\mu}(\vec{x}) \geq \beta$, which is equivalent to guaranteeing that at least $m$ among the values $x_1, \ldots, x_m, c_{A_{2}}, \ldots, c_{A_{m}} \geq \beta$. Inspired by the of margin maximization \cite{scho_lw}, we would even like to guarantee $S_{\mu}(\vec{x}) \geq \beta^+  = \beta + \rho$, where $\rho \geq 0$ is a margin. Let $p= \min\{j \, | \, x_j \geq \beta^+\}$. If $p= m$, then the above condition is automatically fulfilled, and $S_{\mu}(\vec{x}) \geq \beta^+$. In this case, no constraint on the measure $\mu$ needs to be added. Suppose that $p<m$. In this case, to satisfy the condition $S_{\mu} \geq \beta^+$, at least $m-p$ of the values $c_{A_{2}}, \ldots, c_{A_{m}}$ must be $\geq \beta^+$. In light of the order relations $c_{A_{2}} \geq \cdots \geq c_{A_{m}}$ and $c_{A_{1}} = \cdots = c_{A_{m-k}}$, which guarantees the $k$-maxitivity of $\mu$, we set $p=m-k+2$ for all $1< p < m-k+2$. Thus, based on the value of $p$, the constraint $c_{A_{p-1}} \geq \beta^+$ is added. We ignore the particular case of $p=1$.

Now, suppose that $\vec{x}$ is a negative example, i.e., $y = 0$. To classify $\vec{x}$ correctly (with a margin $\rho$), we must guarantee that $S_{\mu} < \beta^- = \beta - \rho$, which is equivalent to guaranteeing that at least $m$ among the values $x_1, \ldots, x_m, c_{A_{2}}, \ldots, c_{A_{m}} < \beta^-$. Let $p= \max \{j \, | \, x_j < \beta^-\}$. If $p= m$, then the above condition is automatically fulfilled, and $S_{\mu}(\vec{x}) < \beta^-$. In this case, no constraint on the measure $\mu$ needs to be added. Let $p<m$. In this case, to satisfy the condition $S_{\mu} < \beta^-$, at least $m-p$ of the values $c_{A_{2}}, \ldots, c_{A_{m}}$ must be $< \beta^-$. In light of the order relations $c_{A_{2}} \geq \ldots \geq c_{A_{m}}$, 
we set $p=m-k$ for all $p < m-k$. Therefore, based on the value of $p$, the constraint $c_{A_{p+1}} < \beta^-$ is added.

For each example $(\vec{x}^{(i)},y^{(i)}) \in \mathcal{D}$, a constraint can be derived according to the procedure outlined above. Obviously, it will not always be possible to satisfy all these constraints simultaneously, i.e., to fit the training data without any error. Therefore, to account for unavoidable mistakes, we introduce slack variables $\xi_i$. When $\vec{x}^{(i)}$ is a positive example, the corresponding inequality becomes $c_{A_{p_i- 1}}+ \xi_i \geq \beta^+$. Likewise, when $\vec{x}^{(i)}$ is a negative sample, the relaxed condition is $c_{A_{p_i + 1}}- \xi_i < \beta^-$. Noting that $2y^{(i)}-1 = +1$ for positive and $=-1$ for negative examples, both conditions can be expressed as follows:
\begin{equation}\label{eq:slc}
- (2y^{(i)}-1) (c_{A_{p_i -y_{i}}} - \beta) - \xi_i + \rho \leq 0
\end{equation}
Finally, the LP consists of minimizing the sum of slack variables $\xi_i$ subject to the above constraints, that is:
\begin{equation}\label{7}
\operatorname*{minimize}_{C , \xi_1, \ldots , \xi_n} \; \sum_{i=1}^n  \xi_i \quad \text{subject to}\quad (\ref{E5}) \text{ and } (\ref{eq:slc})
\end{equation}
A slight modification of the above program is required for the case of learning a $k$-maxitive capacity. In this case, the variables $c_A$ for $|A| > k$ are implicitly defined by the condition $c_A = \max\{ c_B \, | \, B \subset A \}$, and hence not part of the program. In the case of a positive example, to ensure  $S_{\mu}(\vec{x}) \geq \beta^+$, we need to guarantee that $c_A \geq \beta^+$ for at least one subset $A \subseteq B= \{x_p, \ldots , x_m\}$. If $|B| \leq k$, we can simply add the constraint $c_B \geq \beta^+$. Otherwise, we need to assure that $c_A \geq \beta^+$ for at least one proper $k$-subset $A \subset B$. This ``existential'' constraint is of disjunctive nature, and therefore difficult to formalize in terms of an LP. To deal with this problem, we add a single constraint $c_A \geq \beta^+$ for a randomly chosen $k$-subset of $B$. Obviously, this condition is sufficient though not necessary, i.e., our program might be slightly more constrained than necessary. This, however, appears acceptable in light of our idea of using $k$-maxitivity for the purpose of regularization. The case of a negative example can be handled more easily. Here, we need to satisfy $c_A \leq \beta^-$ for all $k$-subsets $A \subseteq B= \{x_p, \ldots , x_m\}$.

Just like the order of maxitivity $k$, the margin $\rho$ is a hyper-parameter of the algorithm that needs to be fixed, for example, through an internal validation procedure. As one advantage in comparison to other margin classifiers, let us mention that our Sugeno classifier produces normalized predictions in the unit interval. Obviously, this simplifies the search for ``reasonable'' values of the margin, which we should expect to be found close to 0, for example in the range $[0,0.1]$ or $[0,0.2]$.

\subsection{Learning the Threshold}

The linear programming approach to learning the capacity $\mu$, as outlined in the previous section, assumes the threshold $\beta$ to be given\,---\,by treating both $\mu$ and $\beta$ as variables, the program would no longer be linear. Correspondingly, the threshold needs to be determined first, prior to solving (\ref{7}).

To this end, we again formulate a linear program, which seeks to find a $\beta$ that is optimal in the sense of minimizing the number of classification errors on the training data $\mathcal{D}$. Yet, since the capacity $\mu$ is not (yet) known, we have to replace the Sugeno classifier by a ``surrogate'' classifier. Here, we suggest to classify an instance $\vec{x}$ as positive if
$$
\text{Med}(u_1, \ldots , u_m) = \text{Med} \big(f_1(x_1),\ldots, f_m(x_m) \big) \geq \beta \, ,
$$
and as negative otherwise. This classifier follows the same principle as the Sugeno classifier, namely the median rule, though without using the values of the capacity $\mu$. The basic idea is to find a threshold that is well attuned to the feature values, so that the capacity can be used later on to optimally ``refine'' or correct the classifications.

In summary, we end up with the following linear program, in which the slack variables $\zeta_i$ are again used to account for those mistakes that cannot be avoided:
\begin{align}\label{10}
& \operatorname*{minimize}_{\beta , \zeta_1, \ldots , \zeta_n} \; \sum_{i=1}^n \zeta_i \\
&  \text{subject to}\quad   \nonumber \\
& - (2y^{(i)}-1) \Big(\text{Med} \left(f_1(x_{1}^{(i)}),\ldots, f_m(x_{m}^{(i)}) \right) - \beta \Big) - \zeta_i \leq 0 \nonumber
\end{align}
for $i=1, \ldots , n $.

\section{\textbf{Complexity reduction}}
\label{Complexity}

As discussed before, restricting the Sugeno integral to $k$-maxitive capacities and estimating the measure $\mu$ on subsets of size at most $k$ may be advantageous from a learning point of view, as it reduces the danger of poor generalization due to over-fitting the training data.  
In this section, we elaborate on how to efficiently determine the hyper-parameter $k$ in the most favorable way.
A practical and quite obvious approach is to determine an optimal $k$ in a data-driven way through cross-validation (on the training data), i.e., trying different values for $k$ and adopting the one that leads to the best estimated generalization performance.
However, from a theoretical point of view, one may wonder what ensures the existence of such $k$, and how significant it is to push the $k$-maxitivity property to an arbitrary capacity $\mu$. 

Once an SI model has been fit, $\mu$ can be considered as an approximately $k$-maxitive capacity, for some suitable $k \in [m]$. 
In mathematical analysis, the significance of this sort of approximation, i.e., the question of how an approximate object can be estimated by an exact object, is studied under the notion of \emph{stability}
\cite{gordji2016theory,hyers1941stability}.

When we constrain the Sugeno integral by $k$-maxitivity, we can still have a significant approximation of that Sugeno integral according to the following stability result. For an arbitrary measure $\mu$ on $2^{[m]}$ and a small real number $\varepsilon \in [0,1),$ we define the following subset of $[m]$:
\begin{align*}
G_{\mu,\varepsilon} = \left\{ k \in [m] ~|~ \mu(B)- \underset{A \subseteq B, |A| \leq k}{\bigvee} \mu(A) \leq \varepsilon, \forall B \in 2^{[m]}\setminus\emptyset \right\}.
\end{align*}

\begin{theorem}\label{T51}
Let $\mu$ be a capacity on $[m]$. For a given $\varepsilon\in [0,1),$ if the set $G_{\mu,\varepsilon}$ attains the minimum at $k^*$, there exists a $k^*$-maxitive measure $\mu^*$ such that
\begin{equation}\label{51}
0 \leq S_{\mu}(\vec{x})- S_{\mu^*}(\vec{x}) \leq \varepsilon
\end{equation}
for all $(\vec{x}, y) \in \mathcal{D}$.
\end{theorem}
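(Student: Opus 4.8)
The plan is to construct the approximating measure $\mu^*$ explicitly, as the ``$k^*$-truncated maxitive envelope'' of $\mu$. Set $k^* \defeq \min G_{\mu,\varepsilon}$; this is well defined because for $B \subseteq [m]$ one may always take $A = B$ in the defining supremum, so $m \in G_{\mu,\varepsilon}$ and the set is nonempty. I would then define
\[
\mu^*(B) \defeq \bigvee_{A \subseteq B, \, |A| \leq k^*} \mu(A) \qquad \text{for all } B \subseteq [m].
\]
First I would check that $\mu^*$ is a capacity: $\mu^*(\emptyset) = \mu(\emptyset) = 0$, and $\mu^*$ is monotone because enlarging $B$ only enlarges the index set of the supremum. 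Monotonicity of $\mu$ moreover gives $\mu(A) \leq \mu(B)$ whenever $A \subseteq B$, hence $\mu^* \leq \mu$ pointwise.

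Next I would verify that $\mu^*$ is $k^*$-maxitive. Fix $U \subseteq [m]$ with $|U| > k^*$ and let $A_0 \subseteq U$ with $|A_0| \leq k^*$ attain the (finite) maximum defining $\mu^*(U)$; then $A_0 \subsetneq U$ since $|A_0| \leq k^* < |U|$. By monotonicity $\mu^*(A_0) \leq \mu^*(U)$, while $\mu^*(A_0) \geq \mu(A_0) = \mu^*(U)$ because $|A_0| \leq k^*$. Hence $\mu^*(A_0) = \mu^*(U)$ for the proper subset $A_0 \subsetneq U$, which is exactly the $k^*$-maxitivity condition.

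It then remains to prove the two-sided estimate \eqref{51}. The lower bound $S_\mu(\vec{x}) \geq S_{\mu^*}(\vec{x})$ follows immediately from the disjunctive normal representation \eqref{2} together with $\mu^* \leq \mu$: every term $\bigl( \bigwedge_{j \in A} u_j \bigr) \wedge \mu^*(A)$ is dominated by $\bigl( \bigwedge_{j \in A} u_j \bigr) \wedge \mu(A)$, and the supremum is monotone. For the upper bound, write $t_A \defeq \bigwedge_{j \in A} u_j$ and choose $A^*$ with $S_\mu(\vec{x}) = t_{A^*} \wedge \mu(A^*)$ (the supremum in \eqref{2} ranges over finitely many sets, hence is attained; if $S_\mu(\vec{x}) = 0$ there is nothing to show, so we may assume $A^* \neq \emptyset$). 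Since $k^* \in G_{\mu,\varepsilon}$, applying its defining inequality with $B = A^*$ yields $\mu^*(A^*) \geq \mu(A^*) - \varepsilon$, whence
\[
S_{\mu^*}(\vec{x}) \;\geq\; t_{A^*} \wedge \mu^*(A^*) \;\geq\; t_{A^*} \wedge \bigl( \mu(A^*) - \varepsilon \bigr) \;\geq\; \bigl( t_{A^*} \wedge \mu(A^*) \bigr) - \varepsilon \;=\; S_\mu(\vec{x}) - \varepsilon ,
\]
where the last inequality uses the elementary fact $a \wedge (b-\varepsilon) \geq (a \wedge b) - \varepsilon$, valid for all reals $a,b$ and $\varepsilon \geq 0$ (check the cases $a \leq b - \varepsilon$ and $a > b - \varepsilon$ separately).

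This gives $S_\mu(\vec{x}) - S_{\mu^*}(\vec{x}) \leq \varepsilon$, and in fact the bound holds for every $\vec{x} \in [0,1]^m$, not only those arising from $\mathcal{D}$. I do not expect a genuine obstacle: the definition of $\mu^*$ is forced by that of $G_{\mu,\varepsilon}$, and the only mildly delicate points are the trivial case $S_\mu(\vec{x}) = 0$ (equivalently $A^* = \emptyset$) and the $\wedge$-versus-subtraction inequality above. Should one additionally want $\mu^*$ to be normalized, note that $k^* \in G_{\mu,\varepsilon}$ applied to $B = [m]$ already forces $\mu^*([m]) \geq 1 - \varepsilon$, so at most a negligible rescaling is required.
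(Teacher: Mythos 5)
Your proof is correct, and it takes a genuinely different route from the paper's. The measure you construct is in fact the same one the paper uses: the paper defines $\mu^*(B)=\mu(B)$ for $|B|\leq k^*$ and $\mu^*(B)=\bigvee_{A\subseteq B,\,|A|=k^*}\mu(A)$ otherwise, and by monotonicity of $\mu$ this coincides with your single formula $\mu^*(B)=\bigvee_{A\subseteq B,\,|A|\leq k^*}\mu(A)$. The difference lies in how the bound \eqref{51} is established. The paper works with the median representation of the Sugeno integral: it locates the ``crossing index'' $p$ where the sorted values $x_{\sigma(j)}$ meet the values $\mu(A_{\sigma(j)})$, then splits into the cases $p\geq m-(k^*-1)$ versus $p\leq m-k^*$, and within the latter into the subcases $x_{\sigma(p)}\leq\mu(A_{\sigma(p)})$ and $\mu(A_{\sigma(p)})\leq x_{\sigma(p)}$. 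You instead work with the disjunctive normal form \eqref{2}: the lower bound is immediate from $\mu^*\leq\mu$ and monotonicity of the lattice expression, and the upper bound reduces to a single elementary inequality $a\wedge(b-\varepsilon)\geq(a\wedge b)-\varepsilon$ applied to the maximizing set $A^*$, using $k^*\in G_{\mu,\varepsilon}$ with $B=A^*$. Your argument is more compact, avoids the case analysis entirely, explicitly verifies $k^*$-maxitivity of $\mu^*$ (which the paper leaves to the reader), and makes clear that the estimate holds for every $\vec{x}\in[0,1]^m$ rather than only for training instances; your closing remark on near-normalization of $\mu^*$ addresses a point the paper's proof also leaves implicit. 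The paper's median-based argument, in exchange, gives somewhat more structural insight into where $S_\mu$ and $S_{\mu^*}$ actually agree (namely whenever the crossing index satisfies $p\geq m-(k^*-1)$).
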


\begin{proof}
For each $\varepsilon \in (0,1),$ the set $G_{\mu,\varepsilon}$ is non-empty, because $m \in G_{\mu,\varepsilon}$ for any $\varepsilon \in (0,1)$. It means that $G_{\mu,\varepsilon}$ is a non-empty finite subset of integer numbers, so it takes its minimum for some $k \in [m]$, say $k^*$. Corresponding to $k^*$, we define the measure $\mu^*$ for each $B \in 2^{[m]}\setminus\emptyset$ as follows:
$$\mu^*(B)= \begin{cases} \mu(B) & |B| \leq k^*, \\
									      \underset{A \subseteq B, |A|= k^*}{\bigvee} \mu(A) & \text{otherwise}.
\end{cases}$$
One can easily check $k^*$-maxitivity properties (cf. Sect. \ref{Sugeno}) for $\mu^*$. 

Assume that $(\vec{x}, y)$ is an instance in the training set $\mathcal{D}$ and $\sigma$ is a permutation of $[m]$ such that $x_{\sigma(1)} \leq x_{\sigma(2)} \leq \ldots \leq x_{\sigma(m)}$. From $k^*$-maxitivity of $\mu^*$, it follows that
\begin{equation}\label{8}
\begin{split}
\mu^{*}\left(A_{\sigma(m)}\right) & \leq \mu^{*}\left(A_{\sigma(m-1)}\right)  \leq  \cdots \leq \mu^{*}\left(A_{\sigma(m-(k^*-1))}\right) = \\
& = \mu^{*}\left(A_{\sigma(m-k^*)}\right) =\cdots=\mu^{*}\left(A_{\sigma(1)}\right),
\end{split}
\end{equation}
where $A_{\sigma(i)}:=\{\sigma(i), \ldots, \sigma(m)\}$. It should be noted that $S_{\mu}(\vec{x})$ is either equal to $x_{\sigma(p)}$ or to $\mu\left(A_{\sigma(p)}\right)$ for some $p \in [m]$ (see \cite[Prop.\ 5.65]{grabisch2009aggregation}).
Indeed, $p$ corresponds to the place where the values $x_{\sigma(j)}$ cross the values $\mu\left(A_{\sigma(j)}\right)$. 

If $p \geq m-(k^*-1)$, then by the definition of $\mu^{*}$, we have $\mu\left(A_{\sigma(p)}\right)= \mu^{*}\left(A_{\sigma(p)}\right)$, and consequently $S_{\mu}(\vec{x})=S_{\mu^{*}}(\vec{x})$. Now, assume that $p \leq m-k^*$. We can distinguish the following two cases:
\begin{itemize}
\item[(1)] If $x_{\sigma(p)} \leq \mu\left(A_{\sigma(p)}\right)$, then  
\begin{equation}\label{8}
S_{\mu}(\vec{x})= x_{\sigma(p)}
\end{equation}
according to the median-description of the Sugeno integral.
Since $\mu^*$ is $k^*$-maxitive and $k^* \in G_{\mu,\varepsilon}$, it is not difficult to deduce that
\begin{equation}\label{9}
\mu\left(A_{\sigma(p)}\right) \leq \mu^{*}\left(A_{\sigma(p)}\right)+ \varepsilon \, ,
\end{equation} 
and therefore, $x_{\sigma(p)} \leq \mu^{*}\left(A_{\sigma(p)}\right)+ \varepsilon$. If $x_{\sigma(p)} \leq \mu^{*}\left(A_{\sigma(p)}\right)$, then $S_{\mu^{*}}(\vec{x})=x_{\sigma(p)}=S_{\mu}(\vec{x})$, otherwise $x_{\sigma(p)} > \mu^{*}\left(A_{\sigma(p)}\right)$. Thus, it follows that $S_{\mu^{*}}(\vec{x})=\mu^{*}\left(A_{\sigma(p)}\right)$. In this case, from \eqref{8} and \eqref{9}, we conclude that $S_{\mu}(\vec{x})- S_{\mu^*}(\vec{x}) \leq \varepsilon$.

\item[(2)] If $\mu\left(A_{\sigma(p)}\right) \leq x_{\sigma(p)}$, then we have $S_{\mu}(\vec{x})= \mu\left(A_{\sigma(p)}\right)$. From the definition, $\mu^{*}\left(A_{\sigma(p)}\right) \leq \mu\left(A_{\sigma(p)}\right) \leq x_{\sigma(p)}$, and so $S_{\mu^*}(\vec{x})= \mu^{*}\left(A_{\sigma(p)}\right)$. Using again the inequality \eqref{9}, we conclude that $S_{\mu}(\vec{x})- S_{\mu^*}(\vec{x}) \leq \varepsilon$.
\end{itemize}
This concludes the proof. \end{proof}
According to Theorem \ref{T51}, any Sugeno integral $S_{\mu}(\vec{x})$ can be approximated by a $k^*$-maxitive Sugeno integral $S_{\mu^*}(\vec{x})$, where $k^*$ can be chosen optimally from $[m]$.

\section{Experimental Results}
\label{Experimental}

\begin{table*}
\centering
 \caption{Summary of data sets.} \label{tab1}
\scalebox{0.95}{
 \begin{tabular}{lccc}
\hline\noalign{\smallskip}
 data set & \# instances & \# attributes & source \\
\noalign{\smallskip}\hline\noalign{\smallskip}\noalign{\smallskip}\noalign{\smallskip}
Dagstuhl-15512 ArgQuality Corpus (DGS)  & 960   & 14  & Wachsmuth et al. \cite{wachsmuth:2017a}\\
Den Bosch (DBS)  & 120   & 8  & Daniels and Kamp \cite{daniels1999application}\\
Mammographic (MMG)  & 961  & 5  & UCI\\
Auto MPG & 392  & 7  & UCI\\
Employee Rejection/Acceptance (ERA)  & 1000   & 4  & UCI\\
Employee Selection (ESL) & 488   & 4  & WEKA\\
Breast Cancer (BCC)  & 286   & 7  & UCI\\
Breast Cancer Wisconsin (BCW)  & 699   & 9  & UCI\\
Lecturers Evaluation (LEV) & 1000   & 4  & WEKA\\
Haberman's Survival Data (HAB)  & 306  & 3 & UCI\\
Indian Liver Patient (ILP) & 583 & 9 & UCI\\
\noalign{\smallskip}\hline
\end{tabular}}
\end{table*}

In this section, we present results of some experimental studies that we conducted to assess the practical performance of the Sugeno classifier. To this end, we collected a set of suitable benchmark data sets, mostly from the UCI\footnote{\url{http://archive.ics.uci.edu/ml/}} and the WEKA machine learning repositories \cite{hall2009weka}. In particular, these are data sets for which a monotonicity assumption is plausible, and which have already been used in previous studies on monotone classification \cite{fallhullermei12}. An overview of the data sets is given in Table \ref{tab1}; for a detailed description of the data, we refer to Section \ref{appdata} in the appendix.

\subsection{Sensitivity Analysis for Threshold Learning}

As we explained in Section~\ref{learning}, the capacity $\mu$ and the threshold $\beta$ of the Sugeno classifier cannot be learned simultaneously (at least not efficiently). Instead, we determine $\beta$ first, using the linear program \eqref{10}, and fit the capacity $\mu$ afterwards. Therefore, as a natural question, one may ask what is lost by decomposing the learning task into two parts, instead of optimizing $\mu$ and $\beta$ jointly. To study this question, we conducted a kind of sensitivity analysis. 


More specifically, instead of only learning the Sugeno classifier with the threshold determined according to \eqref{10}, we learned a classifier (capacity) for all thresholds in the unit interval (i.e., for $0.01, 0.02, \ldots, 1$) and compared the performances of the classifiers thus obtained. To this end, we estimated the generalization error of each classifier in terms of its error on hold-out data, averaged over several random splits of the data for training and testing.    


Figure \ref{fig3} shows results for the LEV data set. As can be seen, the threshold determined according to (\ref{7}) always leads to optimal overall performance. From these results, which look very similar for other data sets, we conclude that our approach to learning the threshold is well suited, with little scope for further improvement due to more sophisticated methods. 

\begin{figure*}[!htb]
\begin{center}
\subfloat[]{\includegraphics[height=4cm,width=5cm]{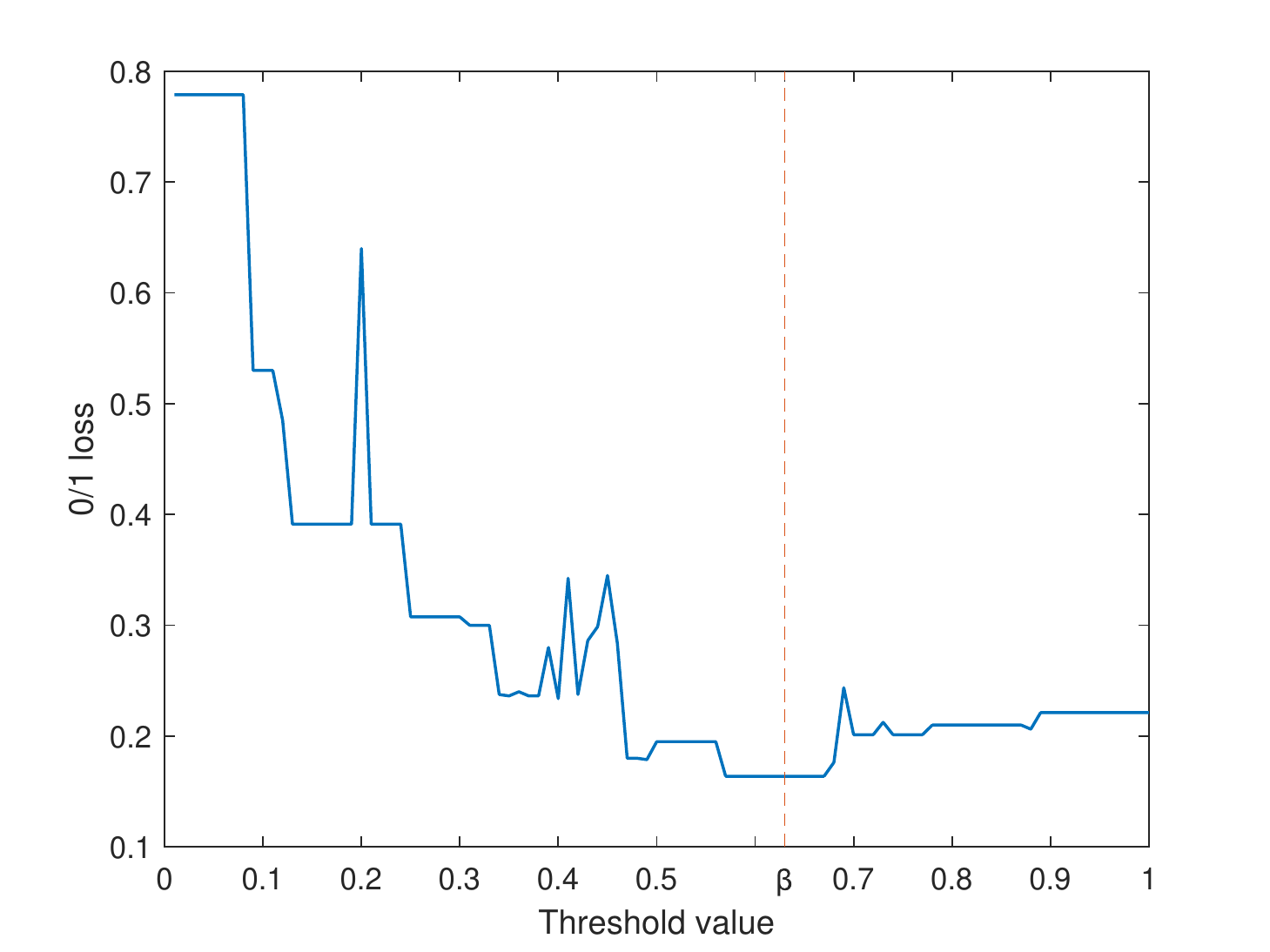}}
\subfloat[]{\includegraphics[height=4cm,width=5cm]{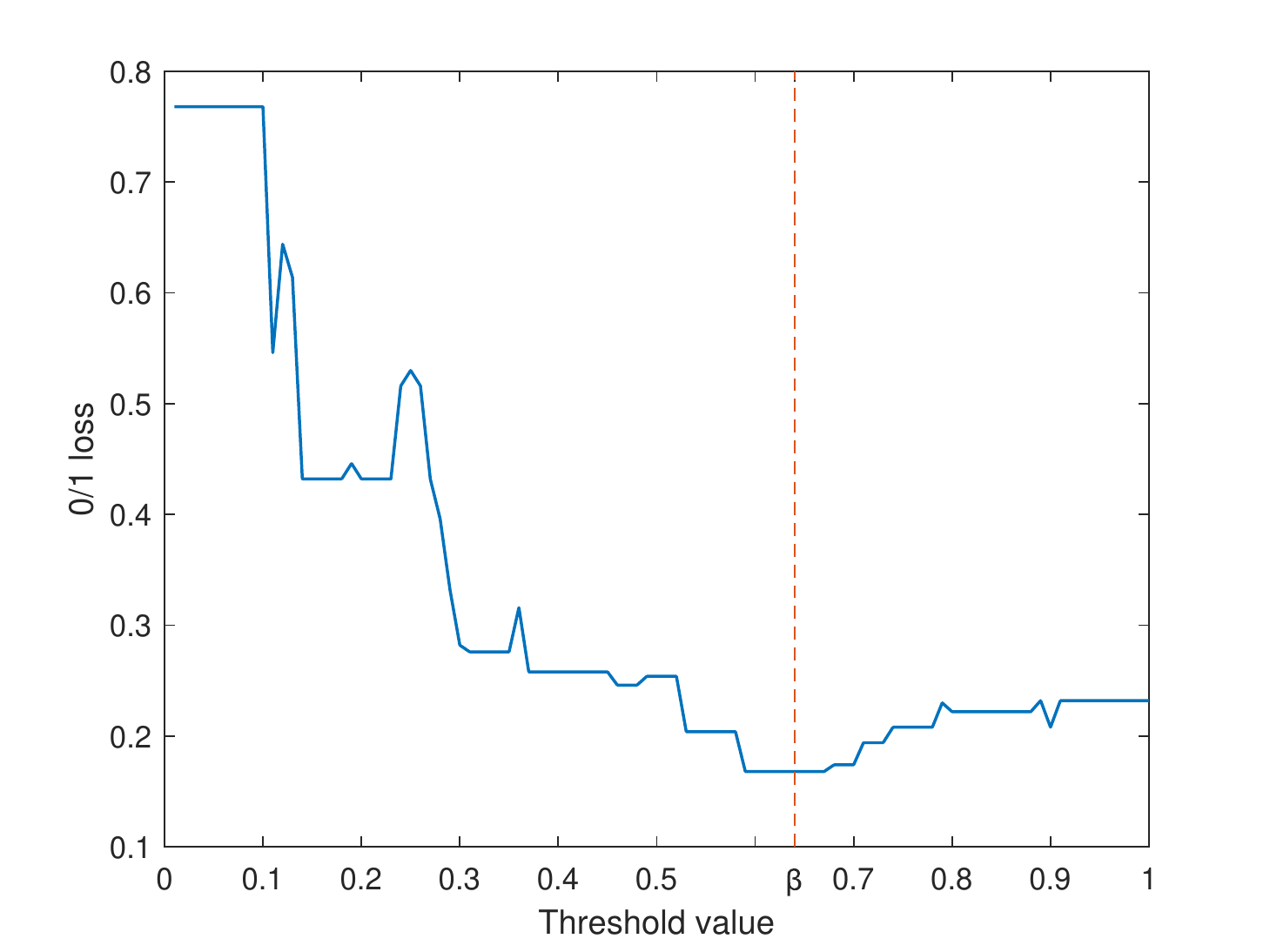}}
\subfloat[]{\includegraphics[height=4cm,width=5cm]{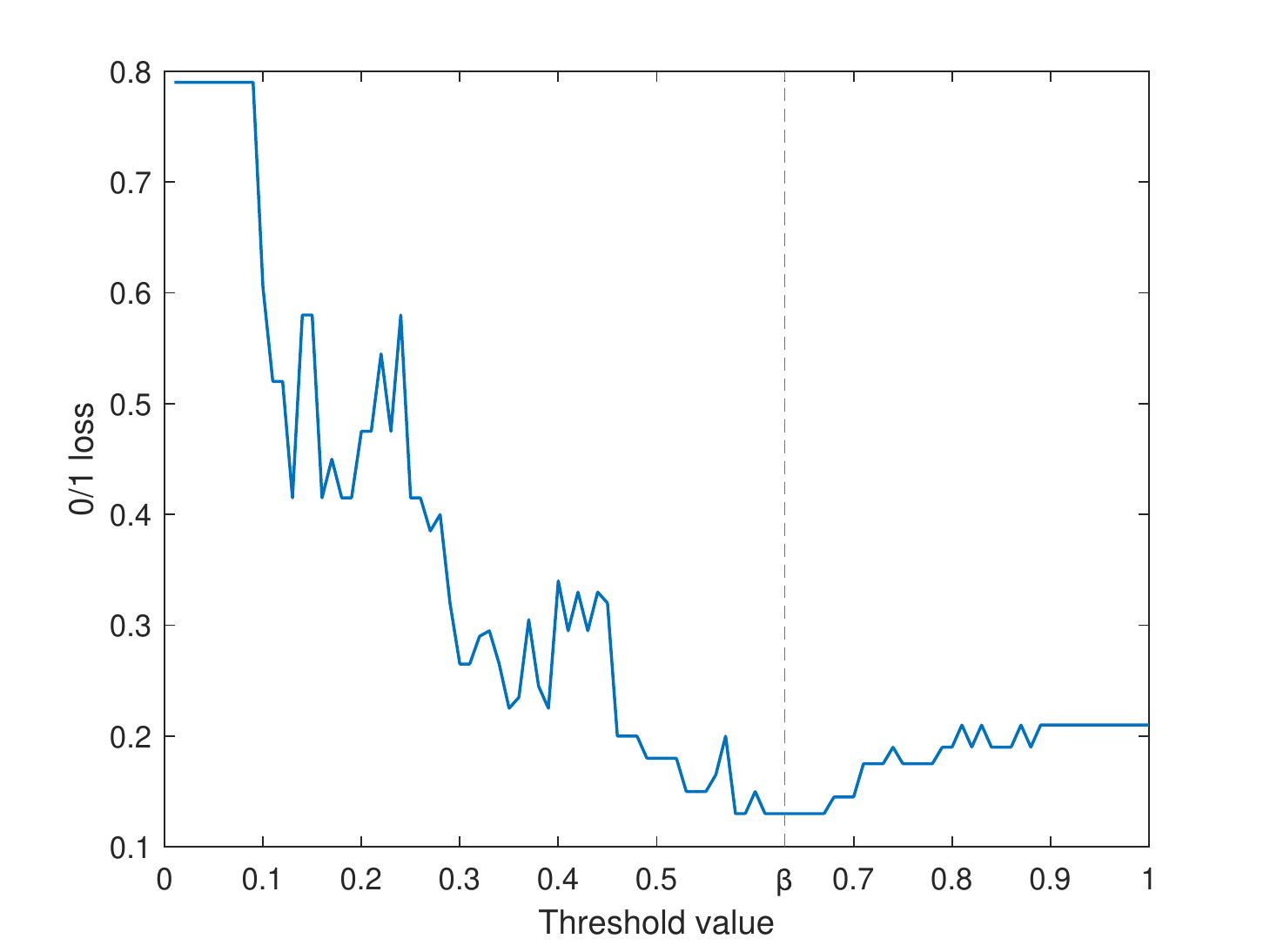}}
\caption{Performance (average $0/1$ loss on test data) of the Sugeno classifier on the LEV data, depending on the choice of the threshold. From left to right: $20\%$, $50\%$, and $80\%$ of the data used for training. The threshold determined by (\ref{7}) is depicted by the vertical line.}\label{fig3}
\end{center}
\end{figure*}

As additional evidence, we ``alternately'' optimized the threshold $\beta$ as follows: Once the capacity $\mu$ (i.e., the set of parameters $\{c_A \, | \, A \subset [m]\}$) has been found based on the original threshold $\beta$ determined by (\ref{7}), we again optimized the threshold given the capacity $\mu$. As before, this can be done by solving an LP. Figure \ref{fig33} compares the threshold $\beta$ determined by (\ref{7}) and the threshold $\beta_0$ determined in this way. As can be seen, both values are very close to each other. 


\begin{figure*}[!htb]
\begin{center}
\subfloat[]{\includegraphics[height=4cm,width=5cm]{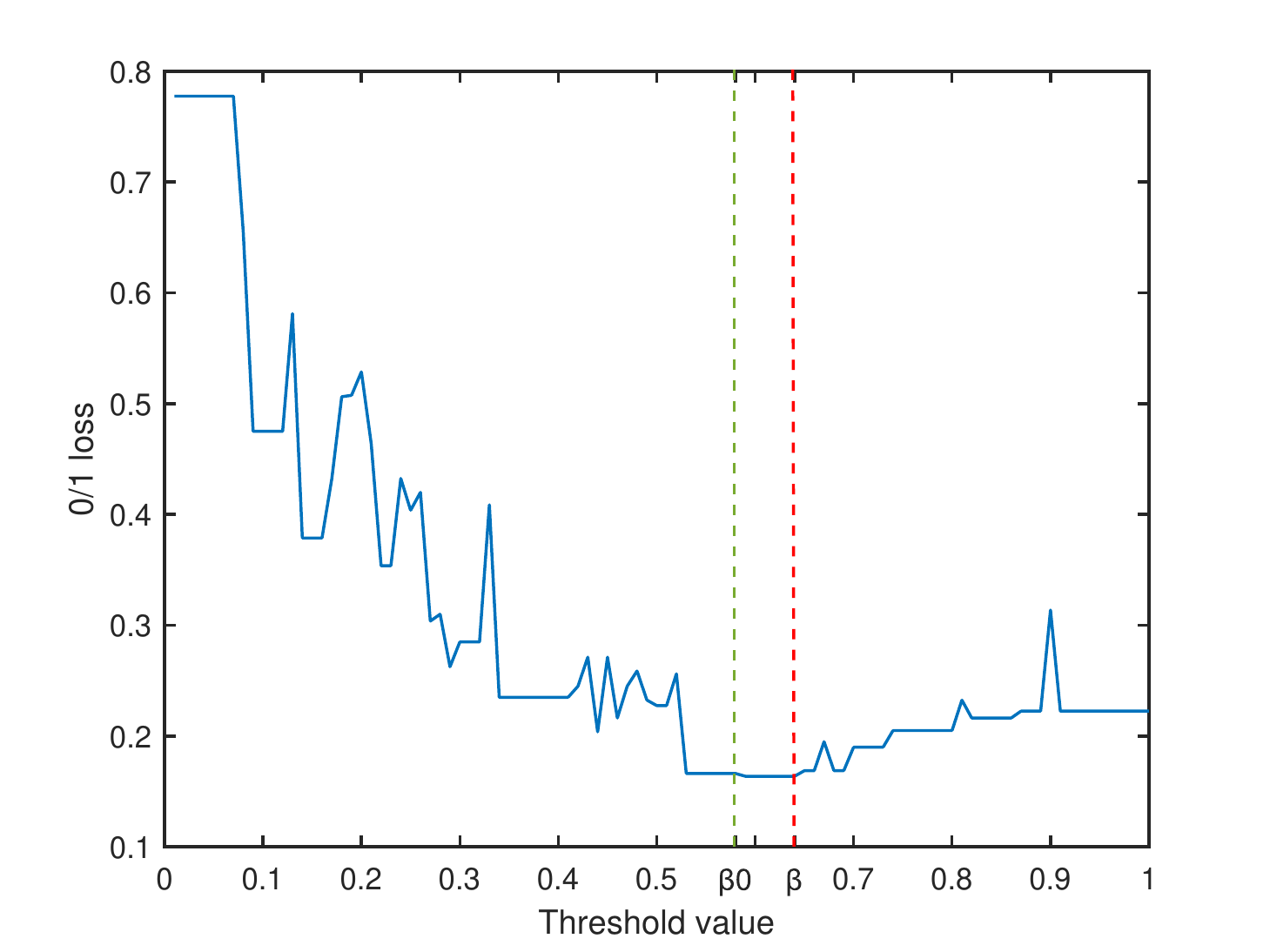}}
\subfloat[]{\includegraphics[height=4cm,width=5cm]{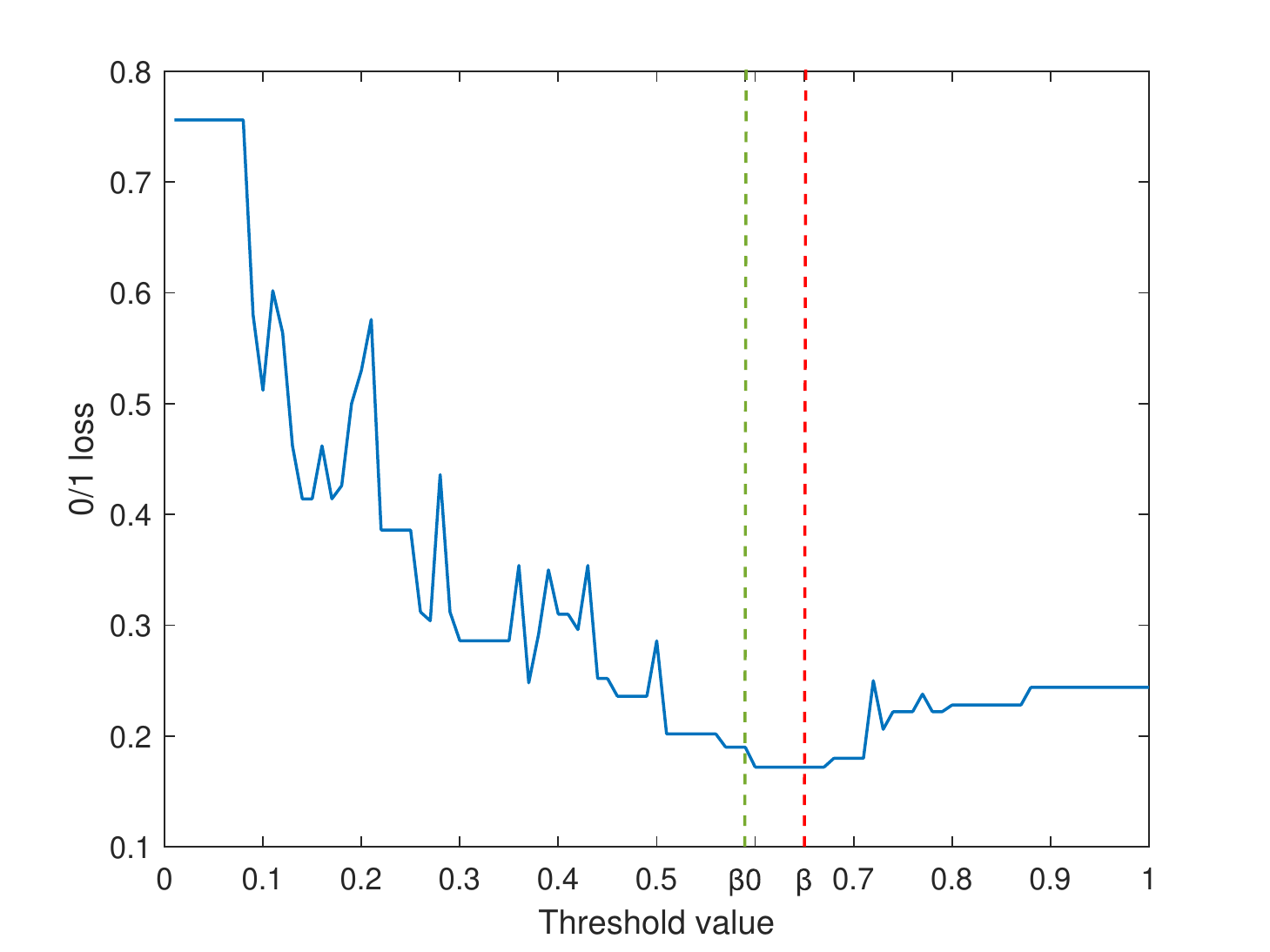}}
\subfloat[]{\includegraphics[height=4cm,width=5cm]{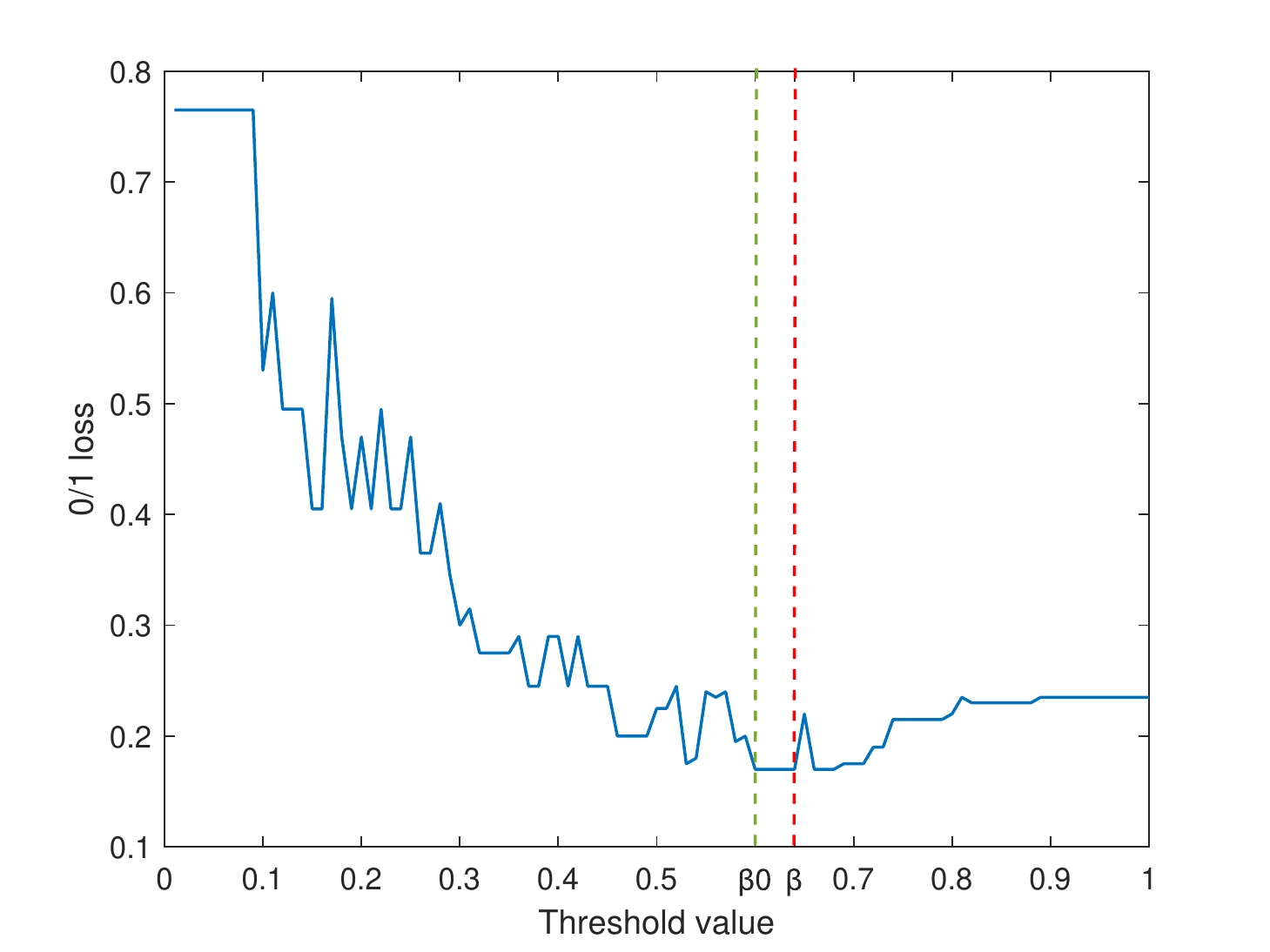}}
\caption{Performance (average $0/1$ loss on test data) of the Sugeno classifier on the MMG data, depending on the choice of the threshold. From left to right: $20\%$, $50\%$, and $80\%$ of the data used for training. The red vertical line depicts the threshold $\beta$ determined by (\ref{7}), and the green vertical line depicts the threshold $\beta_0$ determined by alternate optimization.}\label{fig33}
\end{center}
\end{figure*}

\subsection{Classification Performance}

In this section, we investigate the performance of the Sugeno classifier in terms of its classification accuracy. In all experiments, the data is randomly split into two parts, one for training and one for testing. Performance is then reported in terms of the $0/1$ loss on the test data, averaged over 100 random splits. The Sugeno classifier is determined on the training data as described in Section~\ref{learning}. The degree $k$ of ``maxitivity'' is a hyper-parameter of the learning method, which is determined in an internal cross-validation: To estimate the performance that can be achieved with a certain $k$, the learner conducts a 10-fold cross-validation on the training data. Trying different values for $k$, it picks the presumably best one for training the classifier to be used for prediction on the test data. 

As a baseline to compare with, we used a $2$-additive ``Choquistic regression'' (CR) as proposed in \cite{fallhullermei12}. As already explained in the beginning of the paper, this approach is very close to ours and essentially differs in using the Choquet instead of the Sugeno integral as an aggregation function. As additional baselines, we included standard logistic regression (LR) and decision trees (DT). DT refers to a binary decision tree, fitted for binary classification, and implemented in Statistics and Machine Learning Toolbox of Matlab. We also include a rule-based method which is monotone and flexible, namely the MORE algorithm for learning rule ensembles under monotonicity constraints \cite{demb_lr09}.

\begin{table}[!htb]
\centering
 \caption{\small{Classification performance in terms of mean $\pm$ standard deviation of 0/1 loss for $20\%$ (top), $50\%$ (middle), and $80\%$ (bottom) training set. Average ranks comparing significantly worse with SI at the $90 \%$ confidence level (according to a Friedman-Nemenyi test) are put in bold font.}} \label{tab2}
\scalebox{0.85}{
 \begin{tabular}{l@{\hspace{2mm}}c@{\hspace{2mm}}c@{\hspace{2mm}}c@{\hspace{2mm}}c@{\hspace{2mm}}c@{\hspace{2mm}}c}
\hline\noalign{\smallskip}
 data set & DT & LR & CR & MORE & SI \\
\noalign{\smallskip}\hline\noalign{\smallskip}\noalign{\smallskip}\noalign{\smallskip}
DGS1   & .106$\pm$.012(5)  & .095$\pm$.009(3)  & .093$\pm$.009(2)  & .098$\pm$.012(4)  & .093$\pm$.009(1)\\
DGS2   & .096$\pm$.013(5)  & .088$\pm$.008(1)  & .091$\pm$.008(3)  & .094$\pm$.007(4)  & .089$\pm$.009(2)\\
DBS    & .206$\pm$.052(5)  & .199$\pm$.060(4)  & .183$\pm$.046(3)  & .171$\pm$.041(1)  & .180$\pm$.035(2)\\
MMG    & .199$\pm$.024(5)  & .171$\pm$.011(2)  & .171$\pm$.011(3)  & .172$\pm$.010(4)  & .169$\pm$.013(1)\\
MPG    & .125$\pm$.025(5)  & .110$\pm$.019(3)  & .107$\pm$.017(2)  & .102$\pm$.015(1)  & .113$\pm$.020(4)\\
ESL    & .122$\pm$.016(5)  & .077$\pm$.013(1)  & .080$\pm$.012(2)  & .094$\pm$.011(3)  & .104$\pm$.012(4)\\
ERA    & .312$\pm$.018(5)  & .291$\pm$.012(2)  & .291$\pm$.011(1)  & .304$\pm$.016(3)  & .308$\pm$.017(4)\\
BCC    & .307$\pm$.046(5)  & .285$\pm$.028(4)  & .281$\pm$.036(3)  & .269$\pm$.025(1)  & .270$\pm$.029(2)\\
BCW    & .065$\pm$.014(5)  & .046$\pm$.014(3)  & .046$\pm$.012(2)  & .043$\pm$.010(1)  & .050$\pm$.014(4)\\
LEV    & .177$\pm$.018(5)  & .165$\pm$.010(3)  & .165$\pm$.011(2)  & .169$\pm$.017(4)  & .165$\pm$.012(1)\\
HAB    & .320$\pm$.043(5)  & .270$\pm$.025(3)  & .265$\pm$.020(2)  & .273$\pm$.016(4)  & .265$\pm$.022(1)\\
ILP    & .337$\pm$.025(5)  & .303$\pm$.019(4)  & .301$\pm$.014(3)  & .301$\pm$.023(2)  & .286$\pm$.010(1)\\
\noalign{\smallskip}\hline\noalign{\smallskip}
avg. rank   &  \textbf{5.00} & 2.75  & 2.33 & 2.67 & 2.25\\
\noalign{\smallskip}\hline\noalign{\smallskip}\noalign{\smallskip}\noalign{\smallskip}
DGS1    & .094$\pm$.012(5)  & .091$\pm$.010(4)  & .087$\pm$.010(1)  & .0898$\pm$.013(2)  & .090$\pm$.011(3)\\
DGS2    & .088$\pm$.012(5)  & .078$\pm$.010(1)  & .080$\pm$.009(2)  & .0881$\pm$.012(4)  & .085$\pm$.011(3)\\
DBS     & .181$\pm$.058(5)  & .167$\pm$.048(4)  & .161$\pm$.044(3)  & .1391$\pm$.045(1)  & .159$\pm$.036(2)\\
MMG     & .197$\pm$.018(5)  & .165$\pm$.016(2)  & .166$\pm$.015(3)  & .1671$\pm$.012(4)  & .162$\pm$.019(1)\\
MPG     & .096$\pm$.024(4)  & .095$\pm$.015(3)  & .098$\pm$.017(5)  & .0868$\pm$.016(1)  & .095$\pm$.018(2)\\
ESL     & .097$\pm$.019(4)  & .069$\pm$.014(2)  & .068$\pm$.013(1)  & .0876$\pm$.017(3)  & .105$\pm$.018(5)\\
ERA     & .302$\pm$.019(5)  & .291$\pm$.019(2)  & .290$\pm$.017(1)  & .2973$\pm$.015(3)  & .299$\pm$.015(4)\\
BCC     & .291$\pm$.034(5)  & .263$\pm$.029(3)  & .269$\pm$.032(4)  & .2459$\pm$.032(1)  & .263$\pm$.031(2)\\
BCW     & .052$\pm$.011(5)  & .037$\pm$.008(3)  & .037$\pm$.008(2)  & .0362$\pm$.010(1)  & .044$\pm$.012(4)\\
LEV     & .149$\pm$.016(1)  & .163$\pm$.012(5)  & .162$\pm$.011(3)  & .1622$\pm$.017(4)  & .159$\pm$.014(2)\\
HAB     & .316$\pm$.036(5)  & .260$\pm$.030(2)  & .252$\pm$.030(1)  & .2984$\pm$.047(4)  & .264$\pm$.025(3)\\
ILP     & .339$\pm$.024(5)  & .304$\pm$.023(4)  & .297$\pm$.021(3)  & .2904$\pm$.019(2)  & .285$\pm$.017(1)\\
\noalign{\smallskip}\hline\noalign{\smallskip}
avg. rank   & \textbf{4.50} & 2.92 & 2.42 & 2.50 & 2.67\\
\noalign{\smallskip}\hline\noalign{\smallskip}\noalign{\smallskip}\noalign{\smallskip}
DGS1    & .089$\pm$.018(4)  & .094$\pm$.018(5)  & .088$\pm$.018(3)  & .081$\pm$.016(1)  & .085$\pm$.023(2)\\
DGS2    & .086$\pm$.018(4)  & .079$\pm$.018(2)  & .081$\pm$.017(3)  & .093$\pm$.026(5)  & .076$\pm$.017(1)\\
DBS     & .166$\pm$.090(5)  & .140$\pm$.076(2)  & .142$\pm$.069(3)  & .138$\pm$.076(1)  & .155$\pm$.066(4)\\
MMG     & .198$\pm$.031(5)  & .164$\pm$.024(2)  & .164$\pm$.024(3)  & .168$\pm$.022(4)  & .157$\pm$.028(1)\\
MPG     & .086$\pm$.012(2)  & .098$\pm$.027(4)  & .099$\pm$.026(5)  & .079$\pm$.025(1)  & .092$\pm$.029(3)\\
ESL     & .083$\pm$.028(3)  & .067$\pm$.023(2)  & .063$\pm$.020(1)  & .086$\pm$.024(4)  & .104$\pm$.030(5)\\
ERA     & .297$\pm$.027(5)  & .279$\pm$.029(1)  & .288$\pm$.026(2)  & .291$\pm$.028(3)  & .297$\pm$.026(4)\\
BCC     & .296$\pm$.053(5)  & .270$\pm$.052(3)  & .252$\pm$.047(1)  & .279$\pm$.047(4)  & .266$\pm$.052(2)\\
BCW     & .049$\pm$.018(5)  & .034$\pm$.013(3)  & .034$\pm$.013(2)  & .033$\pm$.015(1)  & .041$\pm$.019(4)\\
LEV     & .143$\pm$.021(1)  & .166$\pm$.023(5)  & .161$\pm$.024(4)  & .157$\pm$.024(3)  & .156$\pm$.020(2)\\
HAB     & .317$\pm$.056(5)  & .260$\pm$.052(3)  & .260$\pm$.052(2)  & .289$\pm$.035(4)  & .258$\pm$.051(1)\\
ILP     & .332$\pm$.038(5)  & .295$\pm$.039(3)  & .304$\pm$.037(4)  & .292$\pm$.010(2)  & .282$\pm$.038(1)\\
\noalign{\smallskip}\hline\noalign{\smallskip}
avg. rank   & \textbf{4.08} & 2.92 & 2.75 & 2.75 & 2.50 \\
\noalign{\smallskip}\hline
\end{tabular}}
\end{table}

The results in terms of  misclassification rate ($0/1$ loss) and rank statistics are shown in Table \ref{tab2} for different amounts of training data. Moreover, Table \ref{tab5} provides a summary of pairwise win/loss statistics. As can be seen, the Sugeno classifier is very competitive and performs quite strongly, often even the best. There are no truly significant differences between the classifiers, however, except that decision trees are clearly outperformed. By and large, LR, CR, MORE, and SI perform the same. This result is nevertheless interesting and encouraging, as it shows that one can take advantage of the representational and algorithmic benefits of the Sugeno classifier without the need to accept a drop in performance.

\begin{table}[!htb]
\centering
 \caption{\small{Win/loss statistics (number of data sets that the first method beats the second one), for $20 \%| 50 \% |80 \%$ training data.}} \label{tab5}
\scalebox{0.95}{
 \begin{tabular}{c|c@{\hspace{2mm}}c@{\hspace{2mm}}c@{\hspace{2mm}}c@{\hspace{2mm}}c|c|c}
\hline\noalign{\smallskip}
 data set & DT & LR & CR &  MORE &  SI  & total wins & total rank\\
\noalign{\smallskip}\hline\noalign{\smallskip}\noalign{\smallskip}\noalign{\smallskip}
DT    & --  & $0 | 1 | 3$  & $0 | 2 | 2$  & $0 | 1 | 3$  & $0 | 2 | 3$         & $0 | 6 | 11$          & $5 |5| 4$\\
LR    & $12| 11 |9 $  &  -- & $3 |4 | 6$  & $7 |6  | 6 $  & $5 |5 |4 $      & $27 | 26 | 25$    & $4 | 4| 3$\\
CR    & $12 |10 |10 $  & $9 |8 |6 $  & --  &  $ 7|7 |6 $ & $4| 6 |5 $     &  $32 | 31 | 27$      & $2 |1| 2$\\
MORE    &  $12 |11 |9 $ & $5 |7 |6 $  & $5 |5 |6 $  & --  & $6 |7 |6 $ &  $28 | 29 | 27$  & $ 3 | 2| 2$\\
SI     & $12 |10 |9 $  & $ 7|7 |8 $  & $8 |6|7 $  & $6 |5 |6 $  & --         &  $33 | 28 | 30$    & $1 | 3| 1$\\
\noalign{\smallskip}\hline
\end{tabular}}
\end{table}

As we discussed before, we exploit the restriction to $k$-maxitive capacities as a means for regularization of the Sugeno classifier, i.e., to prevent overfitting effects that might be expected when fitting a fully maxitive measure. Table \ref{tab7} shows a comparison between the fully maxitive classifier and the $k$-maxitive classifier, which treats $k$ as a hyper-parameter of the learning algorithm. As can be seen, consistent improvements can indeed be achieved through regularization with $k$. Mostly, these improvements are not very big, however, which also shows that even the unregularized Sugeno classifier is not too susceptible to overfitting.

\begin{table*}[!htb]
\centering
 \caption{\small{Comparing the classification performance of fully maxitive and $k$-maxitive Sugeno integral for $20\%$, $50\%$ and $80\%$ training data: average 0/1 loss and average $k$ chosen by the SI classifier.}} \label{tab7}
\scalebox{0.85}{
 \begin{tabular}{lccccccccc}
\hline\noalign{\smallskip}
 data set & $20\%$ training & $20\%$ training  & average & $50\%$ training   & $50\%$ training   & average & $80\%$ training   & $80\%$ training & average \\
              & fully maxitive         & $k$-maxitive   & $k$     & fully maxitive        & $k$-maxitive   & $k$     & fully maxitive         & $k$-maxitive & $k$  \\
\noalign{\smallskip}\hline\noalign{\smallskip}\noalign{\smallskip}\noalign{\smallskip}
DGS1    & .094$\pm$.010  & .093$\pm$.009 & 4.02 & .090$\pm$.010 & .090$\pm$.011 & 3.86 & .088$\pm$.017  & .085$\pm$.023 & 3.57 \\
DGS2    & .088$\pm$.009  & .089$\pm$.009 & 4.08 & .085$\pm$.012 & .085$\pm$.011 & 3.84 & .080$\pm$.019  & .076$\pm$.017 & 3.58 \\
DBS     & .183$\pm$.032  & .180$\pm$.035 & 6.50 & .166$\pm$.040 & .159$\pm$.036 & 6.11 & .154$\pm$.067  & .156$\pm$.066 & 5.93 \\
MMG     & .169$\pm$.011  & .169$\pm$.013 & 4.12 & .165$\pm$.013 & .162$\pm$.013 & 3.79 & .169$\pm$.024  & .157$\pm$.028 & 3.29 \\
MPG     & .115$\pm$.021  & .113$\pm$.020 & 6.15 & .098$\pm$.016 & .095$\pm$.018 & 5.86 & .097$\pm$.031  & .092$\pm$.029 & 5.34 \\
ESL     & .105$\pm$.012  & .104$\pm$.012 & 3.88 & .107$\pm$.016 & .105$\pm$.018 & 3.59 & .108$\pm$.027  & .104$\pm$.030 & 3.00 \\
ERA     & .308$\pm$.019  & .308$\pm$.017 & 3.47 & .300$\pm$.016 & .299$\pm$.015 & 3.56 & .303$\pm$.028  & .297$\pm$.026 & 3.16 \\
BCC     & .275$\pm$.037  & .270$\pm$.029 & 5.35 & .269$\pm$.028 & .263$\pm$.031 & 5.79 & .283$\pm$.048  & .266$\pm$.052 & 5.70 \\
BCW     & .053$\pm$.016  & .050$\pm$.014 & 7.49 & .044$\pm$.013 & .044$\pm$.012 & 6.88 & .039$\pm$.016  & .041$\pm$.019 & 6.81 \\
LEV     & .163$\pm$.011  & .165$\pm$.012 & 3.08 & .158$\pm$.012 & .159$\pm$.014 & 2.96 & .161$\pm$.024  & .156$\pm$.020 & 2.75 \\
HAB     & .290$\pm$.031  & .265$\pm$.022 & 1.82 & .293$\pm$.031 & .264$\pm$.025 & 1.73 & .277$\pm$.055  & .258$\pm$.051 & 1.52 \\
ILP     & .314$\pm$.024  & .286$\pm$.010 & 3.76 & .308$\pm$.021 & .285$\pm$.017 & 3.46 & .300$\pm$.037  & .282$\pm$.038 & 3.41\\
\noalign{\smallskip}\hline
\end{tabular}}
\end{table*}

\section{Conclusion}
\label{Conclusion}

In this paper, we proposed a novel method for binary classification that builds upon the Sugeno integral as a means for aggregating feature information in supervised machine learning. Due to the specific properties of the Sugeno integral, this approach is especially suitable for learning monotone models from ordinal data, although it can of course also be applied to learning from numerical data. We analyzed theoretical properties of the Sugeno classifier, proposed a learning algorithm based on linear programming, and assessed the performance of the classifier in an experimental study. 

Our empirical results are promising and show that the Sugeno classifier is competitive in terms of predictive accuracy, in spite of its seemingly restricted expressiveness compared to more powerful models like the (numerical) Choquet integral. Combined with its ``symbolic'' nature, this makes it quite an appealing approach to data-driven model construction, especially from the point of view of interpretable machine learning and explainable AI \cite{alon_io15,guo_ai19}. There are various directions of future work:

\begin{itemize}
\item Our approach is limited to binary classification and should be extended toward other machine learning problems, such as multinomial and ordinal classification (akin to the extension of the Choquet classifier to ordinal classification \cite{TehraniH13}).
\item Even if the true dependency between the predictor variables and the target is monotone, the training data may violate monotonicity (e.g., due to noise and errors). Nevertheless, our learning algorithm enforces monotonicity by solving a constrained otimization problem. Another idea, which has been put forward in the literature on monotonic classification \cite{feel_mr10}, is to ``monotonize'' the training data first, and then to fit an unconstrained model to the pre-processed data. This approach appears to be a viable alternative for the Sugeno classifier, too, and is certainly worth an investigation.  

\item 
Last but not the least, it is tempting to further elaborate on the connection between the Sugeno classifier and other types of classifiers. As we pointed out, by specifying the underlying capacity in a suitable way, well-known models like the $k$-of-$m$ classifier can be obtained as special cases of the Sugeno classifier. Thus, our approach may provide a unifying framework of a broader class of classifiers, suggest new methods for training such classifiers, reveal interesting relationships between them, and perhaps even suggest new classifiers as specific instantiations.
\end{itemize}

\begin{appendix}

\section{VC Dimension of the Sugeno Classifier}\label{sec:appa}

\begin{theorem}
The VC dimension of the hypothesis space $\mathcal{H}$ comprised of threshold classifiers of the form (\ref{4}) grows asymptotically at least as fast as $2^m / \sqrt{m}$.
\end{theorem}

\begin{proof}
We show the model class $\mathcal{H}$ can shatter a sufficiently large data set $\mathcal{D}$ of the size $2^m / \sqrt{m}$. We construct the set $\mathcal{D}$ by using the binary attribute values, which means that $x_i \in \{0, 1\}$ for all $1 \leq i \leq m$. Accordingly, each instance $\vec{x} = (x_1,\ldots , x_m) \in \{0, 1\}^m$ can be identified with a subset of indices $S_{\vec{x}} \subseteq X$, namely its indicator set $S_{\vec{x}} = \{i| x_i = 1\}$. 

We recall a well-known result of Sperner \cite{Sperner}, who showed that the maximum cardinality of any antichains of the set $[m]$ is the so-called Sperner number $\binom{m}{\lfloor m/2 \rfloor}$. Antichain is a notion in combinatorics regarding the incomparable subsets of an arbitrary set. An antichain $\mathcal{A}$ of  the set $[m]$ is a nonempty proper subset of $2^{[m]}$ such that, for all $A,B \in \mathcal{A}$, neither $A \subseteq B$ nor $B \subseteq A$. The Sperner number is obviously restricted due to the above non-inclusion constraint on pairs of subsets. Sperner showed that the corresponding antichain $\mathcal{A}$ is given by the family of all $q$-subsets of $X$ with $q = \lfloor m/2 \rfloor$, that is, all subsets $A \subseteq X$ such that $|A| = q$.

From a decision making perspective, each attribute can be interpreted as a criterion. Using this fact, we are going to define the specific dataset $\mathcal{D}$ in terms of the collection of all instances $\vec{x} = (x_1, \ldots , x_m) \in \{0, 1\}^m$ whose indicator set $S_{\vec{x}}$ is a $q$-subset of $X$. Naturally, the instances in $\mathcal{D}$ are therefore maximally incomparable; indeed, each instance in $\mathcal{D}$ satisfies exactly $q$ of the $m$ criteria, and there is not a single “dominance” relation in the sense that the set of criteria satisfied by one instance is a superset of those satisfied by another instance. This substantial property will help us to show that $\mathcal{D}$ can be shattered by $\mathcal{H}$.

It should be noted that the set $\mathcal{D}$ can be shattered by a model class $\mathcal{H}$ if, for each subset $\mathcal{P} \subseteq \mathcal{D}$, there is a model $H \in \mathcal{H}$ such that $H(\vec{x}) = 1$ for all $\vec{x} \in \mathcal{P}$ and $H(\vec{x}) = 0$ for all $(\vec{x}, y) \in \mathcal{D} \setminus \mathcal{P}$. Now, we consider any such subset $\mathcal{P}$ from the data set $\mathcal{D}$ as constructed above, and the Segeno integral given by \eqref{2}. We define the measure $\mu$ as follows:
$$\mu(E)= \begin{cases} 1 & \text{if}~ E \supseteq S_{\vec{x}}~ \text{for some}~ \vec{x} \in \mathcal{P}, \\
									      0 & \text{otherwise}.
\end{cases}$$
Obviously, this definition is feasible and yields a proper capacity $\mu$.

Now, consider expression \eqref{2} for any $(\vec{x}, y) \in \mathcal{P}$. For $E = S_{\vec{x}}$, we have $\min_{i \in E} x_i = 1$ and $\mu(E) = 1$, and hence $\text{S}_{\mu}(\vec{x}) = 1$. Likewise, consider expression \eqref{2} for any $(\vec{x}^{'}, y^{'}) \in \mathcal{D} \setminus \mathcal{P}$. From the construction of $\mu$ and the fact that, for each pair $\vec{x} \neq \vec{x}^{'}$ in training set, neither $S_{\vec{x}} \subseteq S_{\vec{x}^{'}}$ nor $S_{\vec{x}^{'}} \subseteq S_{\vec{x}}$, it follows that either $\min_{i \in E} x_{i}^{'} = 0$ or $\mu(E) = 0$, regardless of $E$. More specifically, we can distinguish three cases according to the size of E: If $|E| < q$, then $\mu(E) = 0$. If $|E| > q$, then $\min_{i \in E} x_{i}^{'} = 0$, because $S_{\vec{x}^{'}}$ is a $q$-subset of $X$. If $|E| = q$, then either $E = S_{\vec{x}}$ for some $\vec{x} \in \mathcal{P}$, in which case $\min_{i \in E} x_{i}^{'} = 0$, or $E \neq S_{\vec{x}}$ for all $\vec{x} \in \mathcal{P}$, in which case $\mu(E) = 0$. Consequently, the Sugeno integral is given as follows:
$$\text{SI}_{\mu}(\vec{x})= \begin{cases} 1 & \vec{x} \in \mathcal{P}, \\
									      0 & \text{otherwise}.
\end{cases}$$
Thus, with $\beta=1/2$,  the classifier \eqref{4} behaves exactly as required, that is, it classifies all $\vec{x} \in \mathcal{P}$ as positive and all $\vec{x} \notin \mathcal{P}$ as negative.

We make use of Sterling's approximation of large factorials (and hence binomial coefficients) for the asymptotic analysis. For the sequence $(b_1, b_2, \ldots)$ of the so-called central binomial coefficients $b_n$, it is known that
\begin{align*}
b_n=\binom{2n}{n}=\frac{(2n)!}{(n!)^2} \geq \frac{1}{2} \frac{4^n}{\sqrt{\pi \cdot n}}.
\end{align*}
By setting $n=m/2$ and ignoring constant terms, we conclude that the VC dimension of the model class $\mathcal{H}$ grows asymptotically at least as fast as $2^m / \sqrt{m}$.
\end{proof}

\section{Determining the Direction of Features}
\label{sec:appb}


The Sugeno classifier assumes monotonicity in the sense that the dependency between predictor and target variables is either ``the higher the better'' or ``the lower the better''. In many cases, the direction is known beforehand and provided to the learner as part of the prior knowledge. Otherwise, if this is not the case, we suggest to determine the direction as a pre-processing step in a data-driven way as follows:
\begin{itemize}
\item We fit a flexible model, such as a neural network, to the training data. This model does neither require nor impose any monotonicity condition. 
\item For every variable $x_j$, we determine the output of the model for each original training instance $\vec{x}^{(i)} = (x_{1}^{(i)}, \ldots, x_{m}^{(i)})$ and for the same instance with a slightly increased value for $x_j$, i.e., with $x_{j}^{(i)}$ replaced by $x_{j}^{(i)}+\delta$.
\item We count the number of cases in which the output of the model decreases and increases, respectively, and have a guess on the direction of the variable's influence depending on which of the cases prevails. 
\end{itemize}

\section{Data Sets}
\label{appdata}

For a description of the following data sets, we refer to \cite{fallhullermei12}: Bosch (DBS), Mammographic (MMG), Auto MPG, Employee Selection (ESL), Employee Rejection/Acceptance (ERA), Employee Rejection/Acceptance (ERA), Breast Cancer (BCC), Breast Cancer Wisconsin (BCW), Lecturers Evaluation (LEV), Haberman's Survival Data (HAB), Indian Liver Patient (ILP).


The Dagstuhl-15512 ArgQuality Corpus (DGS) is a data set that consists of 25 to 35 textual debate portal arguments for two postures on 16 issues, such as \textit{christianity vs.\ atheism} and \textit{is the school uniform a good or bad idea}. For each posture pair, the five first-rate texts are taken and also five further are chosen via stratified sampling. So, both high-level arguments and different lower-level qualities are covered and finally, 320 texts (20 argumentative comments $\times$ 16 issues) are chosen. Then, three annotators (two females, one male, from three countries, who work at two universities and one company) discuss all 320 texts. They assess 15 quality dimensions (including its overall quality) in the taxonomy for each comment using three ordinal scores: 1 (low), 2 (average) and 3 (high). In our experiments, we consider the overall quality as the target attribute and binarize it by distinguishing between low-level argument (score 1) and high-level argument (scores 2 and 3). In addition, to improve the accuracy of the classification and predicting based on more related subgroup of attributes, we distinguish two subsets of the attributes, one related to emotional and relevancy criteria, and one related to the logical and reasonableness criteria. The first group includes \textit{local acceptability, cogency, effectiveness, global relevance}, and \textit{emotional appeal}. The second group consists of \textit{arrangement, cogency, global sufficiency, reasonableness, credibility}, and \textit{sufficiency}.

\end{appendix}



\begin{thebibliography}{10}

\bibitem{abbaszadeh2018fuzzy}
Sadegh Abbaszadeh, Alireza Tavakoli, Marjan Movahedan, and Peide Liu.
\newblock Fuzzy aggregation operators with application in the energy
  performance of buildings.
\newblock {\em International Journal for Uncertainty Quantification}, 8(6),
  2018.

\bibitem{alon_io15}
J.M.\ Alonso, C.\ Castiello, and C.\ Mencar.
\newblock Interpretability of fuzzy systems: {C}urrent research trends and
  prospects.
\newblock In J.\ Kacprzyk and W.\ Pedrycz, editors, {\em Springer Handbook of
  Computational Intelligence}, pages 219--237. Springer, Berlin, Heidelberg,
  2015.

\bibitem{brabant2018k}
Q.~Brabant and M.~Couceiro.
\newblock $k$-maxitive sugeno integrals as aggregation models for ordinal
  preferences.
\newblock {\em Fuzzy Sets Syst.}, 343:65--75, 2018.

\bibitem{daniels1999application}
Hennie Daniels and B~Kamp.
\newblock Application of mlp networks to bond rating and house pricing.
\newblock {\em Neural Computing \& Applications}, 8(3):226--234, 1999.

\bibitem{demb_lr09}
K.~Dembczynski, W.~Kotlowski, and R.~Slowinski.
\newblock Learning rule ensembles for ordinal classification with monotonicity
  constraints.
\newblock {\em Fundamenta Informaticae}, 94(2):163--178, 2009.

\bibitem{demirel2018choquet}
Tufan Demirel, Sultan~Ceren {\"O}ner, Serhat T{\"u}z{\"u}n, Muhammet Deveci,
  Mahir {\"O}ner, and Nihan~{\c{C}}etin Demirel.
\newblock Choquet integral-based hesitant fuzzy decision-making to prevent soil
  erosion.
\newblock {\em Geoderma}, 313:276--289, 2018.

\bibitem{feel_mr10}
A.~Feelders.
\newblock Monotone relabeling in ordinal classification.
\newblock In {\em Prod.\ ICDM-2010, 10th IEEE Int.\ Conf.\ on Data Mining},
  pages 803--808, Sydney, Australia, 2010.

\bibitem{gordji2016theory}
Madjid~Eshaghi Gordji and Sadegh Abbaszadeh.
\newblock {\em Theory of Approximate Functional Equations: In Banach Algebras,
  Inner Product Spaces and Amenable Groups}.
\newblock Academic Press, 2016.

\bibitem{grabisch2009aggregation}
Michel Grabisch, Jean-Luc Marichal, Radko Mesiar, and Endre Pap.
\newblock {\em Aggregation functions}, volume 127.
\newblock Cambridge University Press, 2009.

\bibitem{guo_ai19}
M.\ Guo, Q.\ Zhang, X.\ Liao, and Y.\ Chen.
\newblock An interpretable machine learning framework for modelling human
  decision behavior.
\newblock abs/1906.01233, 2019.

\bibitem{hall2009weka}
Mark Hall, Eibe Frank, Geoffrey Holmes, Bernhard Pfahringer, Peter Reutemann,
  and Ian~H Witten.
\newblock The weka data mining software: an update.
\newblock {\em ACM SIGKDD explorations newsletter}, 11(1):10--18, 2009.

\bibitem{mpub256}
E.~H{\"u}llermeier and A.~{Fallah Tehrani}.
\newblock Efficient learning of classifiers based on the 2-additive {C}hoquet
  integral.
\newblock In C.\ Moewes and A.\ N\"urnberger, editors, {\em Computational
  Intelligence in Intelligent Data Analysis}, Studies in Computational
  Intelligence, pages 17--30. Springer, 2012.

\bibitem{hyers1941stability}
Donald~H Hyers.
\newblock On the stability of the linear functional equation.
\newblock {\em Proceedings of the National Academy of Sciences of the United
  States of America}, 27(4):222, 1941.

\bibitem{Radko}
R.\ Mesiar and A.\ Koles\'{a}rov\'{a}.
\newblock $k$-maxitive aggregation functions.
\newblock {\em Fuzzy Sets and Systems}, 346:127--137, 2018.

\bibitem{poth_ct02}
R.~Potharst and A.~Feelders.
\newblock Classification trees for problems with monotonicity constraints.
\newblock {\em SIGKDD Explorations}, 4(1):1--10, 2002.

\bibitem{scho_lw}
B.~Sch\"olkopf and AJ. Smola.
\newblock {\em Learning with Kernels: Support Vector Machines, Regularization,
  Optimization, and Beyond}.
\newblock MIT Press, 2001.

\bibitem{SobrieMP13}
Olivier Sobrie, Vincent Mousseau, and Marc Pirlot.
\newblock Learning a majority rule model from large sets of assignment
  examples.
\newblock In {\em Proc.\ ADT, 3rd International Conference on Algorithmic
  Decision Theory}, pages 336--350, Bruxelles, Belgium, 2013.

\bibitem{SobrieMP15}
Olivier Sobrie, Vincent Mousseau, and Marc Pirlot.
\newblock Learning the parameters of a non compensatory sorting model.
\newblock In {\em Proc.\ ADT, 4th International Conference on Algorithmic
  Decision Theory}, pages 153--170, Lexington, KY, USA, 2015.

\bibitem{Sperner}
E.~Sperner.
\newblock Ein satz\"{u}ber untermengen einer endlichen menge.
\newblock {\em Math. Z.}, 27:544--548, 1928.

\bibitem{suge_to}
M.~Sugeno.
\newblock {\em Theory of Fuzzy Integrals and its Application}.
\newblock PhD thesis, Tokyo Institute of Technology, 1974.

\bibitem{fallhullermei12}
A.~Fallah Tehrani, W.~Cheng, K.~Dembczynski, and E.~H\"ullermeier.
\newblock Learning monotone nonlinear models using the choquet integral.
\newblock {\em Machine Learning}, 89:183--211, 2012.

\bibitem{mpub244}
A.~Fallah Tehrani, W.~Cheng, and E.~H\"ullermeier.
\newblock Preference learning using the {C}hoquet integral: {T}he case of
  multipartite ranking.
\newblock {\em IEEE Transactions on Fuzzy Systems}, 20(6):1102--1113, 2012.

\bibitem{TehraniH13}
A.~Fallah Tehrani and E.~H\"ullermeier.
\newblock Ordinal {C}hoquistic regression.
\newblock In J.\ Montero, G.\ Pasi, and D.\ Ciucci, editors, {\em Proceedings
  EUSFLAT--2013, 8th International Conference of the European Society for Fuzzy
  Logic and Technology}, Milano, Italy, 2013. Atlantis Press.

\bibitem{vapn_sl98}
V.N. Vapnik.
\newblock {\em Statistical Learning Theory}.
\newblock John Wiley \& Sons, 1998.

\bibitem{wachsmuth:2017a}
Henning Wachsmuth, Nona Naderi, Yufang Hou, Yonatan Bilu, Vinodkumar
  Prabhakaran, Tim~Alberdingk Thijm, Graeme Hirst, and Benno Stein.
\newblock Computational argumentation quality assessment in natural language.
\newblock In {\em Proceedings of the 15th Conference of the European Chapter of
  the Association for Computational Linguistics}, 2017.

\end{thebibliography}

\end{document}